%% file: main.tex
\documentclass[11pt,oneside]{article}

\input{mysty}

\usepackage {color}
\usepackage{lmodern}
\usepackage{amsmath,amssymb}
\usepackage{microtype}
\usepackage{geometry}
\usepackage{fontenc}

\begin{document}

\title{JustOPD: On-Policy Distillation from Autoregressive to Diffusion Language Models} 
\author{
Zikun Zhang\thanks{Department of Industrial Engineering and Operations Research, Columbia University, New York, NY 10027.
Emails: \texttt{zz3367@columbia.edu}, \texttt{js6646@columbia.edu},
\texttt{yao@columbia.edu}, \texttt{wt2319@columbia.edu}.}
\and
Jiayuan Sheng\footnotemark[1]
\and
David D. Yao\footnotemark[1]
\and
Wenpin Tang\footnotemark[1]
}
\date{}
\maketitle
\maketitle

\begin{abstract}
On-policy distillation (OPD) is a post-training approach for large language models (LLMs) that enables a student model to learn a teacher model's distribution from sequences sampled from the student's own policy. Despite its promising capability, OPD for diffusion language models (DLMs) remains underexplored and existing studies primarily focuses on DLM-to-DLM distillation. The limited availability of strong teacher DLMs and the flexible generation order of DLMs impose additional technical challenges as well. However, it remains unclear how to effectively distill knowledge from a strong autoregressive language model (ARM) into a weaker DLM, given the mismatch between the left-to-right decoding of ARMs and the flexible decoding order of DLMs.
Inspired by \cite{ni2026the}, we propose JustOPD, an ARM-to-DLM on-policy distillation method that uses autoregressive rollouts during training to align the student's generation process with the ARM teacher's left-to-right decoding. This alignment enables effective distillation from ARM teachers while preserving the DLM's flexible generation-order capability at inference time. Experiments using Qwen3-14B as the teacher and Dream-7B as the student demonstrate that JustOPD consistently improves mathematical reasoning performance over the base DLM across multiple divergence choices.

\end{abstract}

% {\crd
% \section*{To do list:}
% \input{todo}
% }

\section{Introduction}

On-policy distillation (OPD) \citep{agarwal2024policy,lu2025onpolicydistillation} has recently emerged as a key component of LLM post-training, where the student learns directly from the teacher's distribution on responses sampled from the student's own policy, while the teacher provides dense supervision signals for these student-generated responses. For autoregressive language models (ARMs), OPD is relatively straightforward, as the student and teacher can be chosen from the same model family (e.g., the Qwen3 series), leveraging the many strong and readily available open-source ARMs. Moreover, the fixed left-to-right, token-by-token generation process of ARMs naturally aligns the generation trajectories of the student and teacher, making ARM-to-ARM OPD fundamentally compatible.

In contrast, applying OPD to mask-based diffusion language models (DLMs) introduces additional challenges. In particular, the arbitrary-order generation process of DLMs requires careful consideration of the student's generation trajectory, while strong and readily available DLM teachers remain scarce. \citet{ren2026trace} first studied DLM-to-DLM distillation by querying a stronger teacher DLM with the partially denoised sequences generated by the student, which naturally accommodates the arbitrary generation order of DLMs. However, explicitly recording and replaying the generation trajectory of each student sample can be computationally expensive, particularly for long sequences. Furthermore, restricting the teacher to be a DLM limits the scalability and generality of this approach, given the scarcity of strong, open-source DLMs. Overall, the flexibility of arbitrary-order generation introduces substantial complexity to DLM post-training, particularly in aligning the student's generation trajectory with that of the teacher.

Given the limited availability of strong DLM teachers, an appealing alternative is to distill knowledge from a well-trained ARM into a weaker DLM. This direction is particularly promising because ARMs are substantially more mature and widely adopted in the LLM community, with a large number of high-performing models available for distillation. Moreover, ARM-to-DLM distillation offers an attractive way to combine the complementary strengths of the two paradigms: ARMs generally achieve stronger generation quality, while DLMs enable parallel token generation and can substantially reduce generation latency compared with ARMs under comparable hardware and engineering optimization.
However, transferring knowledge from an ARM to a DLM is fundamentally challenging due to their incompatible generation paradigms. An ARM is trained with a causal mask and generates tokens strictly from left to right, conditioning each token on a complete prefix, while a DLM progressively denoises a sequence and can predict tokens in an arbitrary order. As a result, the partially denoised sequences at intermediate denoising steps encountered by a DLM do not generally correspond to valid prefixes under the ARM's training distribution, making it unclear how an ARM teacher can provide meaningful supervision throughout the DLM's denoising process. This mismatch motivates the development of effective ARM-to-DLM distillation methods.

To bridge this gap, we propose to enforce left-to-right rollout generation for the DLM during training, so that the intermediate student states remain valid prefixes that can be directly processed by an ARM teacher. This idea is inspired by JustGRPO \citep{ni2026the}, which observes that arbitrary-order generation during reinforcement learning (RL) rollouts may hinder the reasoning capability of DLMs: the model may generate easier future tokens with high confidence before resolving the more difficult tokens that depend on them. JustGRPO therefore adopts autoregressive-style rollouts during RL training and demonstrates substantial performance gains. Motivated by these observations, we similarly leverage autoregressive rollouts during OPD, which we term \textbf{JustOPD}, to bridge the generation-paradigm gap between ARMs and DLMs, while preserving the flexible arbitrary-order generation capability of DLMs at inference time. We conduct experiments by choosing Dream-7B \citep{ye2025dream} as the base student DLM and Qwen3-14B \citep{yang2025qwen3} as the teacher ARM, because Dream-7B uses a full-attention diffusion model but initialized from a pretrained ARM checkpoint (Qwen2.5-7B weights). Our experiment results on several mathematical reasoning benchmarks (GSM8K and MATH500) demonstrate the effectiveness of JustOPD.

\subsection{Contributions}
Our contributions are summarized as follows:
\begin{enumerate}[leftmargin=*]
\item
We propose JustOPD, an effective ARM-to-DLM on-policy distillation method that transfers knowledge from an autoregressive teacher to a diffusion student. JustOPD enforces autoregressive-style rollouts for the DLM student during training, aligning its intermediate generation trajectory with the ARM teacher and thereby enabling the teacher to provide meaningful token-level supervision.
We instantiate JustOPD with a sampled-token, PPO-style objective that supports general $f$-divergences, and we evaluate the loss only at the $25\%$ of positions where the teacher’s entropy is highest, removing $75\%$ of the student forward passes.

\item 
We empirically validate JustOPD using Dream-7B as the student and Qwen3-14B as the teacher on GSM8K and MATH500. After noting that strong ARM teachers suffer a sharp drop in performance under the short generation lengths typical of DLMs, we fine-tune Qwen3-14B to generate concise responses while preserving accuracy, making it an effective teacher. JustOPD consistently improves student performance across all tested settings, including different $\alpha$-divergences and the Jensen–Shannon divergence. We evaluate these gains using the DLM's native any-order decoder, demonstrating that knowledge distilled through autoregressive rollouts transfers effectively to arbitrary-order inference.

% We demonstrate the effectiveness of ARM-to-DLM OPD through extensive experiments. Using Dream-7B as the student DLM and Qwen3-14B as the teacher ARM, we evaluate JustOPD on multiple mathematical reasoning benchmarks, including GSM8K and MATH500, and show consistent improvements over the base DLM.

\end{enumerate}

\subsection{Related Works}
\textbf{On-Policy Distillation.}
OPD was originally proposed to fine-tune ARMs by pushing the token-level distribution of the student model to that of the teacher model. \citet{agarwal2024policy} introduced Generalized Knowledge Distillation, which formulates on-policy knowledge distillation as a supervised learning problem by training the student on sequences sampled from its own policy while using the teacher to provide token-level supervision. \citet{lu2025onpolicydistillation} further studied an RL-style formulation of OPD based on KL divergence, establishing a connection between on-policy distillation and policy optimization and showing that OPD can serve as a general framework for improving a student model using dense supervision from a stronger teacher. \citet{jin2026entropyaware} showed that the mode-seeking reverse KL collapses generation diversity and yields unstable learning signals at tokens where the teacher distribution has high entropy, so they propose Entropy-Aware On-Policy Distillation, which adds a forward KL term and computed over the teacher's top-k tokens, only at such high-entropy positions. \citet{zhao2026self} proposed On-Policy Self-Distillation (OPSD), in which a single ARM acts as both teacher and student: the teacher is conditioned on privileged information, the student sees only the question, and training minimizes the per-token divergence between the two along the student's own rollouts. Recent studies investigate the mechanisms and data requirements of OPD \citep{li2026rethinking,ding2026does,yu2026understanding}. Complementary work shows that even a single training query can recover most of the gains of full-data OPD \citep{fu2026rethinking}. \citet{song2026survey} reviews recent progress of OPD methods for ARMs. 

\textbf{OPD for DLMs.} OPD for DLMs is limited in the existing literature. TOPD \citep{ren2026trace} distills a stronger DLM teacher on the committed decisions along the DLM student's own denoising trace. HOPD \citep{huang2026hesitation} extends TOPD by matching teacher distributions at every masked position in each denoising step, while the decoder commits only to a confident subset of the DLM's proposed tokens.
Recent works also extend OPSD to DLMs by sourcing the teacher's privileged information from the model's own generation. d-OPSD \citep{luo2026learning} conditions the self-teacher on partially revealed tokens of the student's final answer as a suffix and supervises at the denoising-step level. dOPSD \citep{dat2026dopsd} uses later, more-decoded states of the student's own denoising trajectory as a peek-ahead teacher for an earlier intermediate state. Both report gains over SFT and RL-based post-training of DLMs. In addition, OPDLM \citep{su2026data} instead uses OPD to convert an ARM into a DLM under self-distillation, supervising the student's own denoising states with the frozen ARM teacher queried on causal prefixes of the student's final output. ForkLeft \citep{liu2026forkleft} also resolves the ARM-to-DLM mismatch by comparing the causal teacher and the bidirectional student only on student-generated prefixes, but it generates training rollouts in an entropy-first order and applies supervision at uniformly sampled positions. JustOPD instead uses strictly left-to-right rollouts, so the student and teacher share the same autoregressive factorization and the standard OPD objective applies directly, and it uses teacher entropy not to order the rollout but to select which positions receive supervision.

\textbf{Post-Training of DLMs.}
Due to fundamental modeling differences between the auto-regressive and diffusion generative paradigms for large language models, DLMs enable arbitrary-order decoding at the cost of intractable or computationally prohibitive trajectory likelihood, driving a line of work \citep{zhao2025diffpo,zhao2025d,yang2026mmada,wang2026d2,wang2026revolutionizing,ou2026principled,ni2026the,zhang2026continuous} focusing on improving the reasoning capabilities of DLMs using RL fine-tuning in recent years. In particular,
\citet{ni2026the} indicated that arbitrary-order decoding can narrow rather than expand the reasoning potential of DLMs: confidence-based samplers defer high-entropy ``forking'' tokens until future context has resolved their uncertainty, which lowers Pass@$k$ relative to left-to-right decoding. They therefore propose JustGRPO, which treats the DLM as an autoregressive policy during RL training so that standard GRPO applies with exact likelihoods while keeping parallel decoding at inference. OPD is closely tied to RL: with the teacher–student log-probability gap as a dense per-token reward, it reduces to a policy-gradient objective that can be optimized in standard RL pipelines \citep{gu2024minillm,lu2025onpolicydistillation,zhao2026self,jin2026entropyaware}, and G-OPD \citep{yang2026learning} further generalizes it into a KL-regularized RL objective whose reward can be extrapolated to surpass the teacher. Our work is closely related to the autoregressive rollout strategy introduced by \citet{ni2026the} and the entropy-aware generation strategy proposed by \citet{jin2026entropyaware}, both of which address the challenges arising from arbitrary-order generation in diffusion language models during post-training.

\section{Preliminaries}

\textbf{Notations.}
We use lowercase letters to denote scalars, and boldface lowercase letters to denote vectors. The $i$-th entry of a vector $\x$ is denoted by $\x^i$, or simply $x^i$ when the context is clear, and $\x^{<i}$ denotes $(x^1,\cdots,x^{i-1})$. $\e_j$ refers to a one-hot column vector with a $1$ in the $j$-th position. We use $\mathrm{Cat}(\cdot;\p)$ for a categorical distribution over the vocabulary with probabilities given by $\p$.

\textbf{Diffusion Language Models.} We consider masked diffusion models \citep{shi2024simplified,sahoo2024simple,lou2023discrete,campbell2022continuous}.
Let $\cS=(\cV\cup\{\mathtt{m}\})^L$ be the state space, where $\cV$ is a discrete vocabulary space, $\mathtt{m}$ is the special ``mask'' token, and $L$ is the sequence length.

The forward process of a DLM gradually corrupts the clean data distribution $p_\mathrm{data}$ over $\cS$ into pure mask noise according to a continuous-time transition kernel. At noise level $\tau\in[0,1]$, $\x_\tau$ is a partially-noised sequence, which is obtained by independently masking each dimension of $\x_0\sim p_\mathrm{data}$ with probability $1-\alpha_\tau$. Here,
$\alpha_\tau$ is the noise scheduled to satisfy $\alpha_0\approx 1$ and $\alpha_1\approx 0$ and a common choice is $\alpha_\tau=1-\tau$ \citep{ye2025dream,nie2026large}. 

The reverse process gradually recovers clean tokens starting from the masks. Specifically, a denoiser $\p_\theta(\cdot)$ is trained using a cross-entropy loss \citep{shi2024simplified,sahoo2024simple,ye2025dream,nie2026large}
to predict the distributions of all masked tokens simultaneously. To be more specific, given the current partially denoised sequence $\x_\tau$, for any $i\in [L]$ with $x_\tau^i=\mathtt{m}$, $\p^{(i)}_\theta(\x_\tau)$, the $i$-th output of $\p_\theta(\x_\tau)$, predicts the conditional distribution $p_{0|\tau}(x_0^i=\cdot\mid \x_\tau)$. At inference time, the continuous-time reverse process is discretized over a set of time steps indexed by $t\in \Z_{\geq 0}$, and various strategies can be used to decide which masked tokens to unmask at each denoising step, such as selecting the $k$ tokens with the highest confidence \citep{chang2022maskgit}, the largest margin between the top two probabilities \citep{kim2025train}, or the lowest predictive entropy \citep{ye2025dream,ben2026accelerated}. More details are deferred to Appendix~\ref{app:denoising}.

\textbf{On-Policy Distillation for ARMs.} In ARMs, given a prompt $\q$, the model is parametrized to predict the next-token distribution $p_\theta(\cdot\mid \q,\x^{<t})$ conditioned on the prefix $(\q,\x^{<t})$, so the probability of a response $\x$ factorizes as $p_\theta(\x\mid \q)=\prod_{t=1}^{|\x|}p_\theta(x^t\mid \q,\x^{<t})$. Let $f\in C^1(0,+\infty)$ be a convex function with $f(1)=0$, and define the $f$-divergence $D_f(\x\,\Vert\,\y):=\sum_i y^i\,f\big(x^i/y^i\big)$. Given a teacher ARM $p_{\mathtt{te}}$ and a student ARM $p_\theta$, OPD minimizes the token-level $f$-divergence between the teacher and the student along the student's own trajectories:
\begin{equation}\label{f_div_objective}
    \min_\theta \E_{\q\sim \cD,\ \x\sim p_\theta(\cdot\mid \q)}\frac{1}{|\x|}  \sum_{t=1}^{|\x|}  D_f\big(p_{\mathtt{te}}(\cdot\mid \q,\x^{<t})\,\Vert\, p_\theta(\cdot\mid \q,\x^{<t})\big).
\end{equation}
By evaluating the teacher and student distributions on prefixes generated by the student itself, OPD directly trains the student on the states it is likely to encounter during inference, enabling it to iteratively improve through the teacher's guidance on its own outputs. 
Typical choices of $f$ include $f(u)=-\log u$ (reverse KL), $f(u)=u\log u$ (forward KL), $f(u)=\frac{u^\alpha-\alpha u - (1-\alpha)}{\alpha (\alpha-1)}$ with $\alpha \in (0,1)$ ($\alpha$-divergence, which recovers centralized reverse KL as $\alpha\to0^+$ and centralized forward KL as $\alpha\to1^-$, and $f(u)=\frac{1}{2}\big(u\log u-(1+u)\log \frac{1+u}{2}\big)$ (Jensen-Shannon divergence).

\section{JustOPD}
\textbf{Motivation.} Our goal is to distill the knowledge from a stronger ARM to a weaker DLM. \citet{ni2026the} argue that arbitrary-order generation during RL rollouts may limit the reasoning capability of DLMs, as it may generate easier future tokens with high confidence before determining the more difficult logical dependencies leading to them, which are inherently high-entropy. Hence, during training, we consider a token-level autoregressive policy for the student DLM, where a sequence $\x$ is generated via left-to-right decoding. Specifically, given a prompt $\q$, for $t=1,2,\cdots$, the student policy follows
\begin{equation}\label{ARpolicy}
\pi_\theta(\cdot\mid \q,\x^{<t})
:=\mathrm{Cat}\left(\cdot;\p_\theta^{(t)}(\q,\x^{<t},\mathtt{m},\cdots,\mathtt{m})\right),
\end{equation}
where the denoiser $\p_\theta^{(t)}(\cdot)$ takes as input the prefix $(\q,\x^{<t})$ concatenated with a masked future suffix and predicts the clean token at the $t$-th position.
The resulting sequence-level policy has the token-level factorization:
$$
\pi_\theta(\x\mid \q):=\prod_{t=1}^{|\x|} \pi_\theta(x^t\mid \q,\x^{<t}).
$$
Let $\pi_{\mathtt{te}}$ be an ARM teacher model with the natural trajectory likelihood factorization:
$$
\pi_{\mathtt{te}}(\x\mid \q):=\prod_{t=1}^{|\x|} \pi_{\mathtt{te}}(x^t\mid \q,\x^{<t}).
$$
As a result, the generation order of the ARM teacher and DLM student coincides, which enables us to use the OPD objective~\eqref{f_div_objective} developed for ARMs by simply replacing $p_\theta$ by $\pi_\theta$ and $p_{\mathtt{te}}$ by $\pi_{\mathtt{te}}$. 

\textbf{Full/TopK-Vocabulary Distillation.}
In practice, for training stability, the $f$-divergence in \eqref{f_div_objective} is clipped \citep{zhao2026self} with a constant $C>0$:
$$
D_f^{\text{clip}}\big(\pi_{\mathtt{te}}(\cdot\mid \q,\x^{<t})\,\Vert\, \pi_\theta(\cdot\mid \q,\x^{<t})\big):=\min \left(D_f\big(\pi_{\mathtt{te}}(\cdot\mid \q,\x^{<t})\,\Vert\, \pi_\theta(\cdot\mid \q,\x^{<t})\big), C\right),
$$
which prevents excessively large divergence values from inducing high-variance or unstable gradient updates. Alternatively, the divergence can be computed only over the teacher's top-$K$ tokens \citep{jin2026entropyaware}:
$$
D_f^{\text{TopK}}\big(\pi_{\mathtt{te}}(\cdot\mid \q,\x^{<t})\,\Vert\, \pi_\theta(\cdot\mid \q,\x^{<t})\big):=\sum_{v\in \mathrm{Top}K(\pi_{\mathtt{te}}(\cdot\mid \q,\x^{<t}))} \pi_{\theta}(v\mid \q,\x^{<t})\, f\!\left(\frac{\pi_{\mathtt{te}}(v\mid \q,\x^{<t})}{\pi_{\theta}(v\mid \q,\x^{<t})}\right).
$$
The on-policy training objective is
\begin{equation}\label{topk_div}
\min_\theta \E_{\q\sim \cD,\ \x\sim \pi_{\theta_\mathrm{old}}(\cdot\mid \q)}\frac{1}{|\x|}\sum_{t=1}^{|\x|} D_f^{\dagger}\big(\pi_{\mathtt{te}}(\cdot\mid \q,\x^{<t})\,\Vert\, \pi_\theta(\cdot\mid \q,\x^{<t})\big),
\end{equation}
where $\dagger\in \{\text{clip},\text{TopK}\}$. The pseudocode for full/top-$K$ vocabulary distillation is presented in Algorithm~\ref{alg:full_v} in Appendix~\ref{app:alg}. However, computing the $f$-divergence at the vocabulary level remains computationally expensive, as it requires evaluating the teacher and student probabilities over a large number of vocabulary tokens at every sequence position, which is even more computationally prohibitive for DLMs. We therefore focus on a more computationally efficient sampled-token distillation approach below in this work.

\textbf{Sampled-Token Distillation.}
In objective~\eqref{f_div_objective}, let $u_{\theta,t}:=\pi_{\mathtt{te}}(x^t\mid \q,\x^{<t})/\pi_{\theta}(x^t\mid \q,\x^{<t})$ for a sampled student-generated sequence $\x$. Since $x^t\sim \pi_\theta(\cdot\mid \q,\x^{<t})$, we have $\E_{x^t}[f(u_{\theta,t})]=D_f\big(\pi_{\mathtt{te}}(\cdot\mid \q,\x^{<t})\,\Vert\, \pi_\theta(\cdot\mid \q,\x^{<t})\big)$, so the $f$-divergence admits a single-sample Monte Carlo (MC) estimate \citep{gu2024minillm,lu2025onpolicydistillation,zhao2026self,zhao2026opd+,jin2026entropyaware}:
\begin{equation}\label{OPDobjective}
    \min_\theta \E_{\q\sim \cD,\ \x\sim \pi_\theta(\cdot\mid \q)}\frac{1}{|\x|}  \sum_{t=1}^{|\x|}  f\left(u_{\theta,t}\right).
\end{equation}

The objective \eqref{OPDobjective} can be optimized with policy-gradient methods in RL. In particular, the Proximal Policy Optimization (PPO)-type \citep{schulman2017proximal} surrogate objective is
\begin{equation}\label{PPOloss_ARM}
\max_\theta \E_{\q\sim \cD,\ \x\sim \pi_{\theta_\mathrm{old}}(\cdot\mid \q)} \frac{1}{|\x|}\sum_{t=1}^{|\x|} \min\left(
\rho_t^\theta \cdot A_t,\ \mathrm{clip}(\rho_t^\theta,1-\epsilon,1+\epsilon)\cdot A_t
\right),
\end{equation}
where $\rho_t^\theta:=\pi_\theta(x^t\mid \q,\x^{<t})/\pi_{\theta_\mathrm{old}}(x^t\mid \q,\x^{<t})$ corrects for sampling under $\pi_{\theta_\mathrm{old}}$, and
$
A_t:=-f(u_{\theta_\mathrm{old},t})+u_{\theta_\mathrm{old},t}\, f'(u_{\theta_\mathrm{old},t})
$
is the per-token corrected advantage \citep{zhao2026opd+} of the sampled token $x^t$ (also see Proposition~\ref{prop:corrected_adv}).

\textbf{Centralized $f$-Divergence.}
One may be concerned that the single-sample Monte Carlo estimation in \eqref{OPDobjective} introduces errors compared with the full-vocabulary calculation. Here, we argue that for any $f$-divergence, one can choose its unique centralized version, which improves the signal-to-noise ratio of the estimation while retaining the corresponding advantage up to a constant, so the algorithm remains the same.
For any convex $f\in C^1(0,+\infty)$ with $f(1)=0$, let
\[
\tilde{f}(u):=f(u)-f(1)-f'(1)(u-1),\quad \forall u\in (0,+\infty).
\]
$\tilde f$ is still convex and it holds that $\tilde{f}(1)=\tilde{f}'(1)=0$, so $\tilde f$ is a \textit{centralized} version of $f$. The following Proposition~\ref{prop:signal_to_noise} shows that centralized $f$-divergence improves the local signal-to-noise ratio of Monte Carlo estimation, with the proof deferred to Appendix~\ref{signal_to_noise}.

\begin{table}[!ht]
\centering
\renewcommand{\arraystretch}{1.0}
\begin{tabular}{lccc}
\toprule
\textbf{Divergence}
& \textbf{$f(u)$}
& \textbf{ $\tilde f(u)$} & \textbf{ $A_f(u)$} \\
\midrule
Forward KL
& $u\log u$
& $u\log u-u+1$
& $u$ \\
Reverse KL
& $-\log u$
& $-\log u+u-1$
& $\log u - 1$ \\
$\alpha$-divergence, $\alpha\in(0,1)$
& $\dfrac{u^\alpha-\alpha u-(1-\alpha)}{\alpha(\alpha-1)}$
& unchanged
& $\dfrac{u^\alpha-1}{\alpha}$ \\
Jensen-Shannon
& $\frac{1}{2}\big(u\log u-(1+u)\log\frac{1+u}{2}\big)$
& unchanged
& $\frac{1}{2}\log\frac{1+u}{2}$ \\
\bottomrule
\end{tabular}
\caption{Typical $f$-divergences and their centralized versions $\tilde f$ and per-token advantages $A_f$.}
\label{tab:f_div_advantage}
\end{table}

\begin{proposition}\label{prop:signal_to_noise}
Fix a prefix $(\q,\x^{<t})$. Let $P_t(\cdot)=\pi_{\mathtt{te}}(\cdot\mid \q,\x^{<t})$ and
$Q_{\theta,t}(\cdot)=\pi_\theta(\cdot\mid \q,\x^{<t})$ be normalized, strictly positive distributions
on the same finite support. Define
\[
 u_{\theta,t}(x):=\frac{P_t(x)}{Q_{\theta,t}(x)}
 \quad\text{and}\quad D_{f,t}(\theta):=\mathbb{E}_{x'\sim Q_{\theta,t}}[f(u_{\theta,t}(x'))].
\]
Assume that there exists $\gamma \in (0,1)$ such that $|u_\theta(x)-1|\leq \gamma$ for all $x$ in the support of $Q_{\theta,t}$, and that $f$ is four times continuously differentiable on a neighborhood of $[1-\gamma,1+\gamma]$ with $f''(1)>0$. Let $\sigma_{f,t}^2(\theta):=\mathrm{Var}_{x'\sim Q_{\theta,t}}[f(u_{\theta,t}(x'))]$ be the variance of the single-sample estimate. Then we have the local noise-to-signal ratios $\sigma_{f,t}(\theta)/D_{f,t}(\theta)=\Theta(1/\gamma)$ and $\sigma_{\tilde f,t}(\theta)/D_{\tilde f,t}(\theta)=O(1)$. 
\end{proposition}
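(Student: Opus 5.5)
Set $\delta(x):=u_{\theta,t}(x)-1$, so that $|\delta(x)|\le\gamma$ on the support. Taylor-expand $f$ and $\tilde f$ around $u=1$. Use two exact moment identities under $Q:=Q_{\theta,t}$:
\[
\mathbb{E}_{Q}[\delta]=\sum_x P_t(x)-\sum_x Q(x)=0,
\qquad
\mathbb{E}_{Q}[\delta^2]=\chi^2(P_t\Vert Q)=:s^2 .
\]
The whole argument then compares orders of magnitude in $\delta$. We read ``$\Theta(1/\gamma)$'' in the regime where $\gamma$ is comparable to the actual scale of deviations, i.e.\ $s\asymp\gamma$. If needed, we make this explicit as $\max|\delta|\asymp\gamma$ together with a nondegeneracy condition $s\ge c\gamma$, since the statement is only meaningful up to such constants.

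\textbf{Uncentralized $f$.} Write $f(u)=f'(1)\delta+\tfrac12 f''(1)\delta^2+R_3(\delta)$ with $|R_3|\le M_3|\delta|^3$, where $M_3$ bounds $f'''$ on the neighborhood.

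For the signal, the linear term has zero mean, so $D_{f,t}=\tfrac12 f''(1)s^2+O(\gamma s^2)=\Theta(s^2)$ for $\gamma$ small. Note that $|\delta|^3\le\gamma\delta^2$.

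For the noise, the fluctuation is dominated by the linear term:
\[
\mathrm{Var}_Q[f(u)]=f'(1)^2 s^2+O(\gamma s^2).
\]
This requires controlling the cross-covariance $\mathrm{Cov}(\delta,\delta^2)\le \mathbb{E}|\delta|^3\le\gamma s^2$ and the higher terms in the same way. Hence $\sigma_f=\Theta(s)$, provided $f'(1)\neq0$. This is implicit in the uncentralized claim; for instance, reverse KL has $f'(1)=-1$. The ratio is therefore $\sigma_f/D_f=\Theta(1/s)=\Theta(1/\gamma)$.

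\textbf{Centralized $\tilde f$.} Here $\tilde f(u)=\tfrac12 f''(1)\delta^2+R_3(\delta)$ and $\tilde f''=f''$, so the signal is the same: $D_{\tilde f,t}=\tfrac12 f''(1)s^2(1+O(\gamma))$.

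For the noise, bound the variance by the second moment:
\[
\mathrm{Var}[\tilde f(u)]\le \mathbb{E}[\tilde f(u)^2]\le C\,\mathbb{E}[\delta^4]\le C\gamma^2 s^2 .
\]
So $\sigma_{\tilde f}=O(\gamma s)$ and $\sigma_{\tilde f}/D_{\tilde f}=O(\gamma/s)$, which is $O(1)$ under $s\asymp\gamma$.

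\textbf{Main obstacle.} The delicate step is making the $\Theta(1/\gamma)$ lower bound honest. The bound $|\delta|\le\gamma$ alone gives $\sigma_f/D_f=\Theta(1/s)\ge\Omega(1/\gamma)$, but the upper bound $O(1/\gamma)$ needs $s\gtrsim\gamma$. Likewise the $O(1)$ claim for $\tilde f$ needs $\mathbb{E}\delta^4\lesssim(\mathbb{E}\delta^2)^2$.

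I would handle this by stating the local regime explicitly. Fix the shape of the perturbation, $\delta=\gamma h$ with $\mathbb{E}_Q h=0$ and $\mathbb{E}_Q h^2\asymp\max|h|^2\asymp1$, and let $\gamma\to0$. In this regime, all Taylor remainders become uniformly $O(\gamma)$ relative corrections via the $C^4$ bound on $f$. The fourth derivative is only used to make the remainder in the variance of $\tilde f$ explicit: it is of order $\gamma^3$ relative to the leading $\gamma^2$ term. Once this regime is fixed, the remaining steps are routine.
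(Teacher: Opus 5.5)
Your proposal is correct and follows essentially the same route as the paper. Both arguments Taylor-expand around $u=1$ and use $\mathbb{E}_Q[\delta]=0$ together with $\mathbb{E}_Q|\delta|^k\le\gamma^{k-2}\chi^2$. This gives $D_f=\tfrac12 f''(1)\chi^2(1+O(\gamma))$, $\sigma_f=|f'(1)|\chi(1+O(\gamma))$, and $\sigma_{\tilde f}/D_{\tilde f}=O(\gamma/\chi)$. Both then invoke $f'(1)\neq0$ and the nondegenerate regime $\chi\asymp\gamma$ to reach the stated rates; the paper also relies on both conditions, the second explicitly.

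The only cosmetic difference is in the centralized variance. You bound $\mathrm{Var}[\tilde f(u)]$ crudely by $\mathbb{E}[\tilde f(u)^2]\le C\,\mathbb{E}[\delta^4]$. The paper instead extracts the leading term $\tfrac14 f''(1)^2\mathrm{Var}_Q(\Delta^2)$. Both yield the same $O(1)$ conclusion.
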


The following proposition characterizes the effect of centralizing the $f$-divergence on the corresponding policy gradient and corrected advantage function; the proof is provided in Appendix~\ref{app:prop2}.

\begin{proposition}\label{prop:corrected_adv}
Let the advantage function \citep{zhao2026opd+} be
$
A_f(u):=-f(u)+u f'(u)
$ for all $u\in(0,+\infty)$.
We have:
\begin{enumerate}
\item[(i)] The gradient of the negative divergence satisfies
\[
\nabla_\theta\big(-D_{f,t}(\theta)\big)=\mathbb{E}_{x\sim Q_{\theta,t}}\left[A_f(u_{\theta,t}(x))\,\nabla_\theta\log Q_{\theta,t}(x)\right];
\]
\item[(ii)] $f$ and $\tilde f$ induce the same divergence, and their corrected advantages differ by a constant:
\[
D_{\tilde f,t}(\theta)=D_{f,t}(\theta)\quad \text{and}\quad A_{\tilde f}(u)=A_f(u)-f'(1),\quad \forall u\in(0,+\infty).
\]
\end{enumerate}
\end{proposition}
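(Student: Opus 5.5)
For part (i), I will differentiate $D_{f,t}(\theta)=\sum_x Q_{\theta,t}(x)\,f\big(P_t(x)/Q_{\theta,t}(x)\big)$ term by term. This is legitimate because the support is finite and every $Q_{\theta,t}(x)>0$. Write $Q=Q_{\theta,t}$ and $u=u_{\theta,t}(x)$. Since $P_t$ does not depend on $\theta$, we have $\nabla_\theta u=-u\,\nabla_\theta\log Q(x)$. The product rule then gives
\[
\nabla_\theta\big[Q(x)f(u)\big]=f(u)\nabla_\theta Q(x)+Q(x)f'(u)\big(-u\nabla_\theta\log Q(x)\big)=Q(x)\big(f(u)-uf'(u)\big)\nabla_\theta\log Q(x).
\]
Here I used $\nabla_\theta Q=Q\nabla_\theta\log Q$. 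Summing over $x$ gives $\nabla_\theta D_{f,t}=\mathbb{E}_{x\sim Q}\big[(f(u)-uf'(u))\nabla_\theta\log Q(x)\big]$. Negating this yields the stated identity with $A_f(u)=-f(u)+uf'(u)$. No baseline argument is needed here, because the identity is exact.

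For part (ii), the divergence identity comes from expanding $\tilde f(u)=f(u)-f'(1)(u-1)$, where I use $f(1)=0$. Then
\[
D_{\tilde f,t}=D_{f,t}-f'(1)\,\mathbb{E}_{x\sim Q}[u_{\theta,t}(x)-1]=D_{f,t}-f'(1)\Big(\sum_x P_t(x)-\sum_x Q(x)\Big).
\]
The last term vanishes because $\mathbb{E}_Q[u]=\sum_x P_t(x)$ and both distributions are normalized on the same support. This normalization step is the only place where the hypotheses are genuinely used. For the advantage, I will compute directly. Since $\tilde f'(u)=f'(u)-f'(1)$,
\[
A_{\tilde f}(u)=-f(u)+f'(1)(u-1)+u f'(u)-u f'(1)=A_f(u)-f'(1).
\]
This holds for all $u>0$.

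Finally, I will remark that the constant shift leaves the policy gradient unchanged. The reason is that $\mathbb{E}_{Q}[\nabla_\theta\log Q]=\nabla_\theta\sum_x Q(x)=0$. This is consistent with part (i) combined with $D_{\tilde f,t}=D_{f,t}$, and it gives a sanity check on the whole statement. There is no real obstacle. The only point needing care is the chain-rule step in part (i), specifically the sign and the factor $u$ coming from $\nabla_\theta Q^{-1}$.
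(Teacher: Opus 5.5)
Your proposal is correct and follows essentially the same argument as the paper. For (i), both differentiate term by term over the finite support using $\nabla_\theta u_{\theta,t}=-u_{\theta,t}\nabla_\theta\log Q_{\theta,t}$ and $\nabla_\theta Q_{\theta,t}=Q_{\theta,t}\nabla_\theta\log Q_{\theta,t}$; for (ii), both use $\mathbb{E}_{Q_{\theta,t}}[u_{\theta,t}]=\sum_x P_t(x)=1$ together with $\tilde f'(u)=f'(u)-f'(1)$ and $f(1)=0$.
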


Typical centralized $f$-divergences along with the corrected advantages are summarized in Table~\ref{tab:f_div_advantage}. By Proposition~\ref{prop:corrected_adv} (ii), $f$ and its centralized version $\tilde f$ share the same OPD objective and equivalent corrected advantages up to a constant baseline $f'(1)$. As a result, OPD implicitly employs a centralized $f$-divergence, which improves the signal-to-noise ratio of the single-sample Monte Carlo estimator in \eqref{OPDobjective}.

\begin{algorithm}[!t]
\caption{JustOPD for DLMs: Sampled-Token Distillation}\label{alg}
\begin{algorithmic}[1]
\Require ARM teacher policy $\pi_{\mathtt{te}}$, DLM student policy $\pi_\theta$, PPO clip $\epsilon$, learning rate $\eta$, prompt dataset $\cD$
\For{each training iteration}
    \State Set $\pi_{\theta_{\text{old}}} \leftarrow \pi_\theta$
    \State Sample a prompt batch $\cB \subset \cD$
    \State Rollout buffer $\mathcal{R} \leftarrow \emptyset$
    \For{each prompt $\q \in \cB$}
        \State Sample $\x = (x^1, \cdots, x^{|\x|}) \sim \pi_{\theta_{\text{old}}}(\cdot \mid \q)$
        \For{$t = 1$ to $|\x|$}
            \State Query teacher: $\left(\pi_{\mathtt{te}}(x^t \mid \q,\x^{<t}), H_t^{\mathtt{te}}\right)$
        \EndFor
        \State $\cH^\mathtt{te}\leftarrow \{t\in[|\x|]: H_t^\mathtt{te}\in \mathrm{Top}_{25\%}(\{H_s^\mathtt{te}\}_{s=1}^{|\x|})\}$
        \State Store trajectory-level data:
        $
        \left(\q, \x, \cH^\mathtt{te}, \{\pi_{\mathtt{te}}(x^t \mid \q,\x^{<t})\}_{t\in \cH^\mathtt{te}}\right) \rightarrow \mathcal{R}
        $
    \EndFor
    \For{each mini-batch gradient step}
        \State Sample a mini-batch of prompts $\cB_{\text{mini}} \subset \mathcal{R}$
        \State Compute the loss using \eqref{finalobjective}:
        \[
        \mathcal{L}(\theta) = \frac{1}{ |\cB_{\text{mini}}|} \sum_{(\q,\x) \in \cB_{\text{mini}}} \frac{1}{|\cH^\mathtt{te}|}  \sum_{t\in \cH^\mathtt{te}} \min\left(
\rho_t^\theta \cdot A_t,\ \mathrm{clip}(\rho_t^\theta,1-\epsilon,1+\epsilon)\cdot A_t
\right)
        \]
        \State Update parameters: $\theta \leftarrow \theta + \eta \nabla_\theta \mathcal{L}(\theta)$
    \EndFor
\EndFor
\end{algorithmic}
\end{algorithm}

\textbf{Practical Considerations.} Unlike ARMs, DLMs must evaluate each position independently rather than in a single causal forward pass, so computing exact likelihoods $\{\rho_t^\theta\}_{t=1}^{|\x|}$ in \eqref{PPOloss_ARM} incurs extra per-iteration overhead. \cite{ni2026the} suggest that reasoning is steered by a sparse set of high-entropy forking tokens, and \cite{jin2026entropyaware} indicate that high-entropy tokens
in the teacher model are not merely noise but encode important knowledge in the form of multiple plausible reasoning paths and meaningful uncertainty. Thus, we evaluate $\rho_t^\theta$ only at the top-$25\%$ highest-entropy positions in the teacher model, eliminating $75\%$ of these forward evaluations. Our final ARM-to-DLM OPD objective is
\begin{equation}\label{finalobjective}
    \max_\theta \ \cL(\theta):= \E_{\q\sim \cD,\ \x\sim \pi_{\theta_\mathrm{old}}(\cdot\mid \q)}\frac{1}{|\cH^\mathtt{te}|}  \sum_{t\in \cH^\mathtt{te}} \min\left(
\rho_t^\theta \cdot A_t,\ \mathrm{clip}(\rho_t^\theta,1-\epsilon,1+\epsilon)\cdot A_t
\right),
\end{equation}
where $\cH^\mathtt{te}:=\{t\in[|\x|]: H_t^\mathtt{te}\in \mathrm{Top}_{25\%}(\{H_s^\mathtt{te}\}_{s=1}^{|\x|})\}$ with $H_t^\mathtt{te}:=-\sum_{x\in\cV} \pi_\mathtt{te}(x\mid \q,\x_{<t})\log\pi_\mathtt{te}(x\mid \q,\x_{<t})$ being the teacher’s token-level entropy at position $t$. The JustOPD algorithm is presented in Algorithm~\ref{alg}.

\begin{remark}\label{rmk:ar}
Following the argument of \citet{ni2026the}, the autoregressive-style sampling in JustOPD does not turn the DLM into an ARM. We use autoregressive sampling only during training, to match the sampling mode of the teacher ARM. At test time, the model retains its any-order generation capability, and its architecture, including bidirectional attention, is unchanged.
\end{remark}

\begin{remark}[Clipping Teacher-to-Student Probability Ratio]\label{rmk:ratio}
The variance of the single-sample MC estimator is governed by how fast $A_t$ grows in $u_t$, i.e., on tokens to which the teacher assigns much higher probability than the student. For reverse KL and Jensen-Shannon divergence, $A_t$ grows only logarithmically in $u_t$, so the single-sample estimator is well behaved. For the $\alpha$-divergence, $A_t$ grows as $u_t^\alpha$, interpolating between logarithmic growth as $\alpha\to0^+$ and linear growth as $\alpha\to1^-$ (see Figure \ref{fig:advantage_growth}). For forward KL, $A_t=u_t$ grows linear in $u_t$ and its second moment $\E_{x^t\sim \pi_\theta}[u_t^2]=\sum_{v}\pi_{\mathtt{te}}(v\mid \q,\x^{<t})^2/\pi_\theta(v\mid \q,\x^{<t})$ can be very large whenever the student underweights tokens favored by the teacher, which leads to high-variance gradients and unstable training. For such teacher-weighted divergences, it is therefore preferable to compute the divergence at the vocabulary level, either over the full vocabulary with clipping or over the teacher's Top-$K$ tokens. Since we focus on sampled-token distillation in the experiments, we clip $\log u_t$ to $[-5,5]$ to stabilize training for the forward KL and the $\alpha$-divergence.
% For such teacher-weighted divergences, it is therefore preferable to compute the divergence at the vocabulary level (either clipping or Top-K). Since the teacher's top-$K$ tokens typically carry most of its probability mass, the truncation error of \eqref{topk_div} is small for forward KL and for the $\alpha$-divergence with $\alpha$ close to $1$. In contrast, reverse KL weights each token by the student's probability, and the teacher's top-$K$ tokens need not cover the student's mass. In summary, single-sample MC with the advantage \eqref{corrected_adv} is preferred for reverse KL, Jensen-Shannon divergence, and the $\alpha$-divergence with small $\alpha$, while the top-$K$ estimate \eqref{topk_div} is preferred for forward KL and the $\alpha$-divergence with $\alpha$ close to $1$. 
\end{remark}

\begin{figure}[!ht]
\centering
\includegraphics[width=0.85\linewidth]{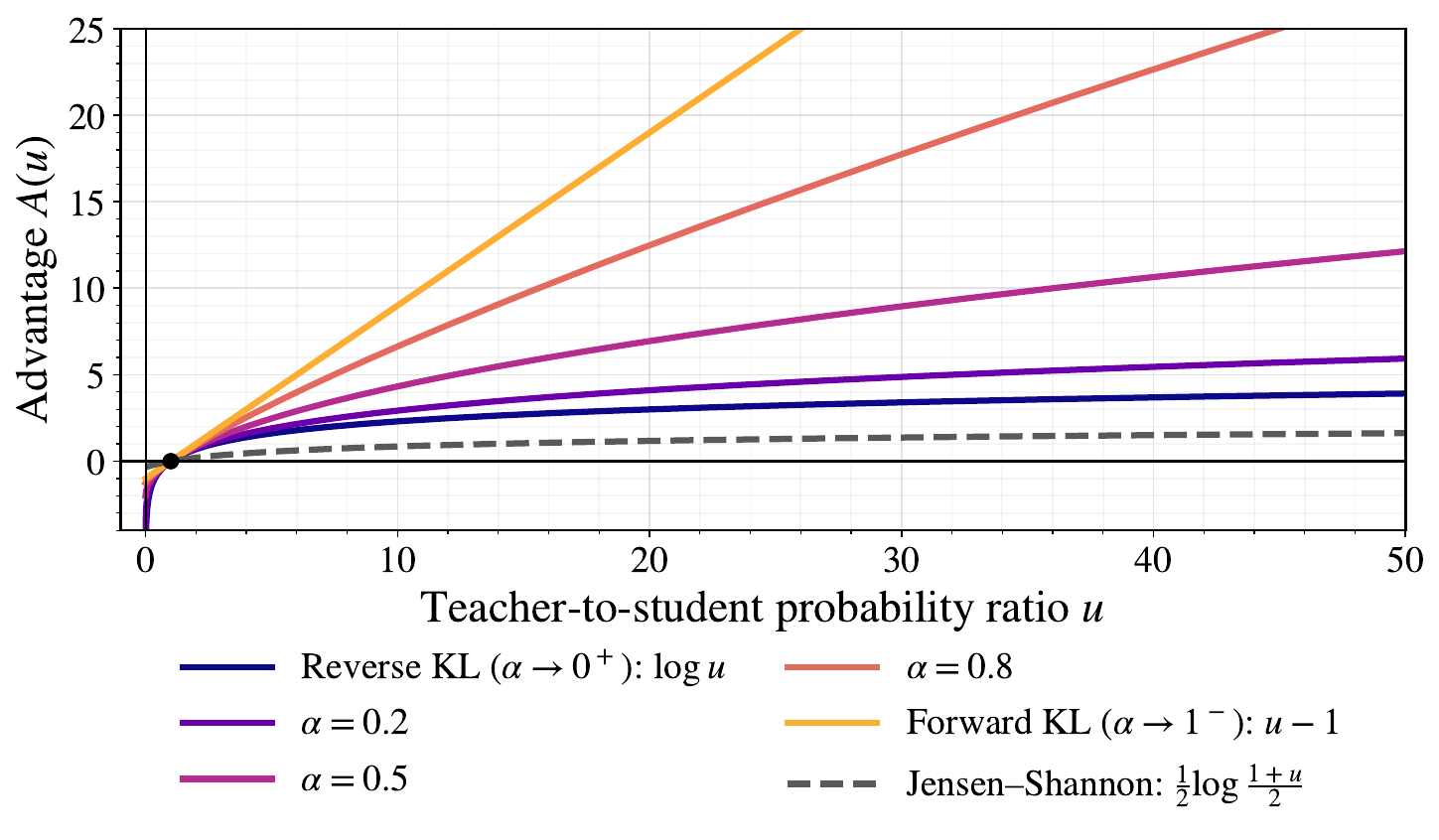}
\caption{The $\alpha$-divergence advantage interpolates between the logarithmic growth of reverse KL and the linear growth of forward KL. The Jensen-Shannon advantage also grows logarithmically.}
\label{fig:advantage_growth}
\end{figure}

\textbf{Black-Box OPD for DLMs.} When the teacher model is a black box that can only be queried without access to its logits, we consider full-vocabulary distillation with the forward KL divergence ($f(u)=u\log u$). In this case, the on-policy training objective in \eqref{topk_div} can be written without explicitly estimating the KL divergence as follows:
\begin{align}
    &\min_\theta \E_{\q\sim \cD,\ \x\sim \pi_{\theta_\mathrm{old}}(\cdot\mid \q)}\frac{1}{|\x|}\sum_{t=1}^{|\x|} D_\text{KL}\big(\pi_{\mathtt{te}}(\cdot\mid \q,\x^{<t})\,\Vert\, \pi_\theta(\cdot\mid \q,\x^{<t})\big)\notag\\
    ={}& \min_\theta \E_{\q\sim \cD,\ \x\sim \pi_{\theta_\mathrm{old}}(\cdot\mid \q)}\frac{1}{|\x|}\sum_{t=1}^{|\x|} 
    \E_{y^t\sim\pi_{\mathtt{te}}(\cdot\mid \q,\x^{<t})}
    \log\big(\pi_{\mathtt{te}}(y^t\mid \q,\x^{<t})/\pi_\theta(y^t\mid \q,\x^{<t})\big)\notag\\
        ={}& \min_\theta \E_{\q\sim \cD,\ \x\sim \pi_{\theta_\mathrm{old}}(\cdot\mid \q)}\frac{1}{|\x|}\sum_{t=1}^{|\x|} 
    -\E_{y^t\sim\pi_{\mathtt{te}}(\cdot\mid \q,\x^{<t})}
    \log \pi_\theta(y^t\mid \q,\x^{<t}) + C',\label{obj:blackbox}
\end{align}
where $C'$ is a constant that is not dependent on $\theta$. The objective~\eqref{obj:blackbox} is similar to supervised fine-tuning (SFT) objective \citep{sun2026step}, which gives us a black box OPD algorithm for DLMs by sampling tokens from the teacher (Algorithm~\ref{alg:blackbox} in Appendix~\ref{app:alg}). The inner expectation in \eqref{obj:blackbox} is approximated via the sampled tokens from teacher by Monte Carlo estimation. In addition, since we do not have access to the logits of teacher model, we cannot compute the teacher entropy as in Algorithm~\ref{alg}. To reduce the number of forward passes required for loss computation, we consider trajectory subsampling by randomly sampling a subset of time steps $\cT \subset [|\x|]$ along each trajectory $\x$ and computing the loss only at the sampled time steps \citep{zhang2026continuous}:
\begin{equation}\label{obj:blackbox_subsampling}
    \min_\theta \E_{\q\sim \cD,\ \x\sim \pi_{\theta_\mathrm{old}}(\cdot\mid \q)}\frac{1}{|\cT|}\sum_{t\in \cT} 
    -\E_{y^t\sim\pi_{\mathtt{te}}(\cdot\mid \q,\x^{<t})}
    \log \pi_\theta(y^t\mid \q,\x^{<t}).
\end{equation}

\section{Experiments}

\subsection{Setup}
The choice of teacher and student should be carefully chosen because ARMs and DLMs usually use different tokenizers. Dream-7B \citep{ye2025dream} is a full-attention diffusion language model initialized from a pretrained ARM checkpoint (Qwen2.5-7B). It inherits the Qwen tokenizer, so it shares its vocabulary with the Qwen model family. This allows token-level distillation from Qwen models without complex vocabulary alignment. We therefore use Dream-7B as the student and Qwen3-14B \citep{yang2025qwen3} as the teacher in all experiments.

\textbf{Teacher-student gap under restricted generation lengths.} OPD is expected to be most effective when there is a clear performance gap between teacher and student. DLMs are typically trained and decoded with much shorter sequence lengths than ARMs. As a result, the teacher is queried on relatively short sequences generated by the DLM (e.g., $L \leq 1024$), rather than the much longer sequences (e.g., $L=8192$) that ARMs normally use. We therefore first evaluate both models on two mathematical reasoning benchmarks, GSM8K \citep{cobbe2021training} and MATH500 \citep{lightman2024let}, with generation lengths restricted to $L\in \{256, 512, 1024\}$. We report accuracy in Table~\ref{tab:dream_qwen_acc}. Throughout this work, Dream-7B is evaluated with $L/2$ denoising steps, temperature $0.1$, top-$p$ sampling with $p=0.9$, and the entropy-based unmasking strategy, which unmasks the 2 lowest-entropy positions at each denoising step. Evaluation therefore uses the DLM’s native arbitrary-order decoding, even though training rollouts are generated left to right.

Table~\ref{tab:dream_qwen_acc} shows that the teacher–student gap depends strongly on $L$, and the two models move in opposite directions. Qwen3-14B degrades sharply at short lengths. At $L=256$ it reaches only $51.71\%$ on GSM8K and $14.40\%$ on MATH500, far below Dream-7B ($77.03\%$ and $31.80\%$). The likely cause is that Qwen3-14B is trained to produce much longer responses: its native context length is 32K tokens, compared with 2K for Dream-7B. Many of its responses are therefore truncated before reaching a final answer. We also see that Dream-7B degrades at long lengths of $L=1024$ on both tasks. Based on these observations, we use the original Qwen3-14B as the teacher at $L=512$ for GSM8K and at $L=1024$ for MATH500.

\begin{table}[!ht]
\centering
\setlength{\tabcolsep}{5pt}
\begin{tabular}{c cc cc}
\toprule
& \multicolumn{2}{c}{\textbf{GSM8K}}
& \multicolumn{2}{c}{\textbf{MATH500}} \\
\cmidrule(lr){2-3}
\cmidrule(lr){4-5}
\textbf{Accuracy}
& \textbf{Dream-7B} & \textbf{Qwen3-14B}
& \textbf{Dream-7B} & \textbf{Qwen3-14B}
\\
\midrule
$L=256$
& 77.03 & 51.71
& 31.80 & 14.40\\

$L=512$
& \cellcolor{gray!20}78.77
& \cellcolor{gray!20}93.93
& 48.20 & 48.20 \\

$L=1024$
& 56.25 & 94.84
& \cellcolor{gray!20}36.80 & \cellcolor{gray!20}70.80 \\
\bottomrule
\end{tabular}
\caption{Performance of Dream-7B and Qwen3-14B with different generation lengths $L$.}
\label{tab:dream_qwen_acc}
\end{table}

\textbf{Making the teacher effective at short lengths.} To study distillation in the short-sequence regime, where DLMs are most efficient, we fine-tune a separate Qwen3-14B model for each task. Each model is trained to produce shorter answers (e.g., fewer than $256$ tokens) while largely preserving accuracy. We denote the resulting models by Qwen3-14B-FT; fine-tuning details are provided in Appendix~\ref{FTQwen}. As shown in Table~\ref{tab:dream_qwenFT_acc}, fine-tuning raises Qwen3-14B’s accuracy at $L=256$ from $51.71\%$ to $93.48\%$ on GSM8K and from $14.40\%$ to $53.20\%$ on MATH500. Both now exceed Dream-7B by a wide margin, which makes Qwen3-14B-FT a suitable teacher at $L=256$.

\begin{table}[!ht]
\centering
\setlength{\tabcolsep}{5pt}
\begin{tabular}{c cc cc}
\toprule
& \multicolumn{2}{c}{\textbf{GSM8K}}
& \multicolumn{2}{c}{\textbf{MATH500}} \\
\cmidrule(lr){2-3}
\cmidrule(lr){4-5}
\textbf{Accuracy}
& \textbf{Dream-7B} & \textbf{Qwen3-14B-FT}
& \textbf{Dream-7B} & \textbf{Qwen3-14B-FT}
\\
\midrule
$L=256$
& \cellcolor{gray!20}77.03 & \cellcolor{gray!20}93.48
& \cellcolor{gray!20}31.80 & \cellcolor{gray!20}53.20
\\

% $L=512$
% & \cellcolor{gray!20}84.99
% & \cellcolor{gray!20}93.93
% & 48.20 & 48.20
% & 0.00 & 3.33
% &0.00 & 0.83 \\

% $L=1024$
% & 56.25 & 94.84
% & \cellcolor{gray!20}36.80 & \cellcolor{gray!20}70.80
% & \cellcolor{gray!20}0.00 & \cellcolor{gray!20}13.33
% & \cellcolor{gray!20}0.00 & \cellcolor{gray!20}4.58 \\
\bottomrule
\end{tabular}
\caption{Performance of Dream-7B and fine-tuned Qwen3-14B with a short sequence length of $L=256$.}
\label{tab:dream_qwenFT_acc}
\end{table}

\subsection{Results} 
We fine-tune Dream-7B with JustOPD for 100 training steps, evaluate every 10 steps, and report the best checkpoint. We compare $\alpha$-divergences with $\alpha \in \{0, 0.2, 0.5, 0.8, 1\}$ at $L=256$, where $\alpha=0$ recovers the reverse KL and $\alpha=1$ the forward KL, as well as the Jensen–Shannon divergence (JSD). Further training details are given in Appendix~\ref{app:training_details}.
Table~\ref{tab:results2} reports results with Qwen3-14B-FT as the teacher at $L=256$. Table~\ref{tab:results} reports results with the original Qwen3-14B as the teacher at $L=512$ (GSM8K) and $L=1024$ (MATH500), where we choose $\alpha\in \{0,0.2,0.5,0.8,1\}$ due to the time cost at longer sequences.

\begin{table}[!ht]
\centering
\setlength{\tabcolsep}{8pt}
\begin{tabular}{l cc cc cc c}
\toprule
\multicolumn{1}{c}{\textbf{Acc.} $(L=256)$}
& Dream-7B
& \textbf{$\alpha=0$} 
& \textbf{$\alpha=0.2$}
& \textbf{$\alpha=0.5$}
& \textbf{$\alpha=0.8$}
& \textbf{$\alpha=1$}
& JSD\\
\midrule
\textbf{GSM8K}
& 77.03 
& \underline{80.97}
& \textbf{81.80}
& 80.36 & 80.67 & 79.68 & 80.52 \\
\textbf{MATH500}
& 31.80
& 38.80
& 36.60
& 38.40 & \textbf{39.60}& 38.40 & \underline{39.40} \\
\bottomrule
\end{tabular}
\caption{Performance of Dream-7B before and after OPD with Qwen3-14B-FT on GSM8K and MATH500 on the short sequence length of $L=256$ across different $\alpha$-divergences and Jensen-Shannon divergence (JSD). $\alpha=0$ is the reverse KL and $\alpha=1$ is the forward KL. }
\label{tab:results2}
\end{table}

\begin{table}[!ht]
\centering
\setlength{\tabcolsep}{8pt}
\begin{tabular}{l cc cc c}
\toprule
\multicolumn{1}{c}{\textbf{Acc.}}
& Dream-7B
& \textbf{$\alpha=0$} 
& \textbf{$\alpha=0.5$}
& \textbf{$\alpha=1$} & JSD\\
\midrule
\textbf{GSM8K} ($L=512$)
& 78.77
& \textbf{84.45}
& \underline{83.24} 
& 79.23 & 81.96 \\
\textbf{MATH500} ($L=1024$)
& 36.80
& 43.80
& \underline{46.20}
& 42.80  & \textbf{48.00} \\
\bottomrule
\end{tabular}
\caption{Performance of Dream-7B before and after OPD on GSM8K and MATH500 with Qwen3-14B across different $\alpha$-divergences and JSD.  The generation length $L$ is specified for each benchmark.}
\label{tab:results}
\end{table}

\textbf{JustOPD consistently improves the student.} In every setting, JustOPD improves over the base Dream-7B, across all divergences, both benchmarks, and all three generation lengths.
These gains are measured with the DLM’s native entropy-based, arbitrary-order decoder, even though the student was trained only on left-to-right rollouts. Knowledge distilled through autoregressive rollouts therefore transfers to any-order inference, which supports Remark~\ref{rmk:ar}: JustOPD aligns training with the ARM teacher without constraining the DLM’s generation mode at test time.

\textbf{Distillation partially offsets the student’s length-dependent weaknesses.} At $L=256$, the JustOPD-trained student reaches $38.80\%$ on MATH500 with reverse KL and up to $39.60\%$ with $\alpha=0.8$. Both exceed the base Dream-7B at $L=1024\ (36.80\%)$ while generating a quarter as many tokens. At $L=1024$, the best distilled model ($48.00\%$ with JSD) matches the base model’s accuracy at $L=512\  (48.20\%)$. This suggests that a large part of the gain at long lengths comes from mitigating the degradation Dream-7B shows when asked to generate longer responses.

\textbf{Effect of the divergence.} No single divergence is best across all settings, but there are some consistent patterns. Mode-seeking divergences work well on GSM8K. In Table~\ref{tab:results}, reverse KL gives by far the largest improvement (+6.68), followed by $\alpha=0.5 \ (+4.47)$ and JSD $(+3.19)$. The forward KL barely improves on the base model $(+0.46)$. 
In Table~\ref{tab:results2}, the spread on GSM8K is much narrower ($79.68\%$ to $81.80\%$), but the forward KL is again the weakest, which matches Remark~\ref{rmk:ratio}. With sampled-token estimation, the forward-KL advantage $A_t = u_t$ is linear in $u_t$, so the gradient has high variance whenever the student underweights tokens the teacher favors. Clipping $\log u_t$ stabilizes training but may also discard useful signal. 
Divergences that interpolate between the two KL directions work well on MATH500. At $L= 1024$, JSD $(48.00\%)$ and $\alpha=0.5 \ (46.20\%)$ outperform both the forward KL $(44.60\%)$ and the reverse KL $(43.80\%)$; at $L=256$, the best results are from $\alpha=0.8 \ (39.60\%)$ and JSD $(39.40\%)$. One possible explanation is that MATH500 solutions involve more branching reasoning, making the teacher's next-token distributions more likely to assign substantial probability to multiple plausible continuations.
In this case, an objective that does not overly favor the most likely continuation may better preserve the teacher's distribution over plausible alternatives.
JSD is a robust choice, which improves every setting and gives the best MATH500 result at $L = 1024$. Its advantage grows only logarithmically in $u_t$, which keeps the single-sample estimator well behaved.

Overall, these results show that ARM-to-DLM on-policy distillation with autoregressive rollouts is effective. A stronger ARM teacher can provide useful dense supervision to a DLM student, provided the teacher’s generation length is matched to the regime in which the student operates.

% \subsection{Block-Attention DLMs}
% SDAR \citep{cheng2026sdar} adapts pretrained AR models into block diffusion models via off-policy conversion: The SDAR-Chat series (1.7B, 4B, 8B, 30B-A3B) are generated starting from the Qwen3 series (1.7B, 4B, 8B, 30B-A3B) \citep{yang2025qwen3} base models. Thus, we may consider SDAR-Chat-4B the student model and Qwen3-8B the teacher model.

\section{Conclusions and Future Works}
We studied on-policy distillation from autoregressive to diffusion language models and proposed JustOPD. JustOPD resolves the mismatch between the two generation paradigms by using left-to-right rollouts for the DLM student during training only. This lets an ARM teacher supervise every intermediate student state, while the student keeps its arbitrary-order decoding at inference. Experiments with Dream-7B as the student and Qwen3-14B as the teacher show consistent gains on GSM8K and MATH500 across $f$-divergences and generation lengths.

Several promising directions remain for future work. First, preliminary results on AIME 2024 show that JustOPD enables Dream-7B to achieve a nonzero Pass@16, whereas the base model solves no problems (Table~\ref{tab:aime}). Future work could evaluate JustOPD on more challenging reasoning benchmarks, with longer generation lengths and stronger teachers. Second, JustOPD could be extended to block-attention DLMs, such as Efficient-DLM \citep{fu2026efficientdlm} and SDAR \citep{cheng2026sdar}, whose semi-autoregressive structure may be particularly well suited to distillation from ARM teachers. Third, future work would evaluate the black-box OPD for DLMs, which would enable distillation from teachers whose internal logits are inaccessible.
Finally, the current token-level distillation approach assumes that the teacher and student share a tokenizer, limiting the range of compatible model pairs. Incorporating cross-tokenizer distillation techniques \citep{patiño2025_unlocking_on_policy_distillation_for_any_model_family} could remove this constraint and broaden JustOPD's applicability.

% Third, autoregressive rollouts require one forward pass per token, making rollout generation a major computational bottleneck. Integrating JustOPD with inference-acceleration techniques, such as KV caching and parallel decoding (e.g., Fast-dLLM \citep{wu2026fast,wu2026fastv2}), could substantially reduce training costs. 

\section*{Acknowledgment}
Tang is supported by NSF CAREER Award DMS-2538791, NSF MFAI Award DMS-2602038 and the Tang Family Assistant Professorship.
Sheng and Zhang are supported by NSF CAREER Award DMS-2538791.
This work is part of a Columbia-CityU/HK collaborative project that is supported by InnoHK Initiative, The Government of the HKSAR and the AIFT Lab.

{
        \bibliographystyle{plainnat}
        \bibliography{ref.bib}
}

\begin{appendix}
\newpage

\section{Inference Strategies of Masked Diffusion Models}\label{app:denoising}

For masked diffusion models, at a noise level $\tau\in [0,1]$, the forward transition kernel factorizes across dimensions as
$
p_{\tau|0}(\x_\tau\mid \x_0)
=
\prod_{i=1}^L p_{\tau|0}(x_\tau^i\mid x_0^i),
$
where
$$
p_{\tau|0}(x_\tau^i\mid x_0^i)=\mathrm{Cat}(x_\tau^i;\alpha_\tau \e_{x_0^i}+(1-\alpha_\tau)\e_\mathtt{m}),
$$
where $\alpha_\tau$ is the noise scheduled to satisfy $\alpha_0\approx 1$ and $\alpha_1\approx 0$. Given $\x_0\sim p_\mathrm{data}$, for $0\leq \sigma\leq \tau\leq 1$, the reverse process gradually recovers the clean data from the pure mask, with the transition probability \citep{shi2024simplified} given by
\begin{equation}\label{random_denoising}
p_{\sigma|\tau,0}(x_\sigma^i\mid \x_\tau,\x_0)=\begin{cases}
    \mathrm{Cat}(x_\sigma^i;\e_{x_\tau^i}), & x_\tau^i\neq \mathtt{m};\\
    \mathrm{Cat}(x_\sigma^i;\frac{1-\alpha_\sigma}{1-\alpha_\tau}\e_\mathtt{m}+\frac{\alpha_\sigma-\alpha_\tau}{1-\alpha_\tau}\e_{x_0^i}),& x_\tau^i=\mathtt{m}.
\end{cases}
\end{equation}
We see that conditional on $\x_0$, a masked token $x_\tau^i=\mathtt{m}$ will jump to $x_0^i$ at time $\sigma$ with probability $\frac{\alpha_\sigma-\alpha_\tau}{1-\alpha_\tau}$ and then remains the same until $\tau=0$. 

% Analogously, \citet{tang2026tweedie} derive conditional-expectation representations of score functions for non-Gaussian processes, yielding corresponding denoising score-matching objectives for continuous-state diffusion models.

At inference time, starting from a fully masked sequence $\x_0$, a clean sequence $\x$ is unmasked via an iterative trajectory $(\x_t)_{t=0}^T$ with $\x_0=(\mathtt{m},\dots,\mathtt{m})\in\{\mathtt{m}\}^L$, where each intermediate $\x_t$ is obtained from $\x_{t-1}$ by unmasking a subset of its masked positions while keeping all previously unmasked tokens fixed\footnote{We do not consider remasking for correction \citep{wang2026remasking} here.}, and $\x_T (=\x)$ is the final clean output, where $T (\leq L)\in \Z_+$ is the number of denoising steps. According to \eqref{random_denoising}, the vanilla random sampling \citep{shi2024simplified,nie2026large} is: for $t=1,\dots,T$,
\begin{itemize}
    \item Sample a set of masked positions $\mathcal{S}_t\subset \mathcal{M}(\x_{t-1}):=\{i\in[L]: x_{t-1}^i=\mathtt{m}\}$, where each $i\in\mathcal{M}(\x_{t-1})$ is included independently with probability
    \begin{equation*}
        \mathbb{P}(i\in \mathcal{S}_t)=\frac{\alpha_t-\alpha_{t-1}}{1-\alpha_{t-1}},
    \end{equation*}
    where $0=\alpha_0<\alpha_1<\dots<\alpha_T=1$ is the unmasking schedule (so that, in expectation, a fraction $\alpha_t$ of the tokens is unmasked after step $t$).
    \item For each $i\in\mathcal{S}_t$, sample $x_t^i\sim \mathrm{Cat}(\cdot;\p_\theta^{(i)}(\x_{t-1}))$ independently across positions, and set $x_t^i=x_{t-1}^i$ for all $i\notin\mathcal{S}_t$.
\end{itemize}
In contrast, adaptive sampling \citep{kim2025train} replaces the random selection of positions by a deterministic (or model-dependent) oracle $\mathcal{F}$, i.e., $\mathcal{S}_t=\mathcal{F}(\theta,\x_{t-1})\subset\mathcal{M}(\x_{t-1})$, while the second step stays the same. A common choice of $\mathcal{F}$ is the top-probability (confidence) strategy \citep{chang2022maskgit}: to unmask $K_t$ positions at step $t$ (i.e., $|\mathcal{S}_t|=K_t$), each masked position $i\in\mathcal{M}(\x_{t-1})$ is assigned the confidence score
\begin{equation*}
    c_i:=\max_{j\in\mathcal{V}}\, p_\theta^{(i)}(x^i=j\,|\,\x_{t-1})\in[0,1],
\end{equation*}
and $\mathcal{F}(\theta,\x_{t-1})=\mathrm{TopK}_{K_t}\big(\{c_i\}_{i\in\mathcal{M}(\x_{t-1})}\big)$ selects the $K_t$ most confident masked positions. Another widely used choice is the lowest-entropy strategy \citep{ye2025dream}: each masked position $i\in\mathcal{M}(\x_{t-1})$ is assigned the entropy of its predicted marginal,
\begin{equation*}
    H_i:=-\sum_{j\in\mathcal{V}} p_\theta^{(i)}(x^i=j\,|\,\x_{t-1})\,\log p_\theta^{(i)}(x^i=j\,|\,\x_{t-1})\in[0,\log|\mathcal{V}|],
\end{equation*}
and $\mathcal{F}(\theta,\x_{t-1})=\mathrm{TopK}_{K_t}\big(\{-H_i\}_{i\in\mathcal{M}(\x_{t-1})}\big)$ selects the $K_t$ masked positions with the smallest entropy. Unlike the confidence score $c_i$, which only looks at the most likely token, $H_i$ accounts for the whole predictive distribution. A position is therefore considered certain only when the probability mass is concentrated on few tokens, not merely when one token is ahead of the rest. 
The autoregressive-type sampling procedure in \eqref{ARpolicy} corresponds to the intermediate state
$$
\x_t=(\q,\x^{\leq t},\mathtt{m},\ldots,\mathtt{m}),\quad  t\in [T],
$$
with the final clean sequence given by $\x=\x_T$.

\section{Algorithms}\label{app:alg}
JustOPD algorithms for Full/TopK-Vocabulary Distillation and Black-Box Teacher Distillation are presented in Algorithms~\ref{alg:full_v} and \ref{alg:blackbox}, respectively.

\begin{algorithm}[!ht]
\caption{JustOPD for DLMs: Full/TopK-Vocabulary Distillation}\label{alg:full_v}
\begin{algorithmic}[1]
\Require ARM teacher policy $\pi_{\mathtt{te}}$, DLM student policy $\pi_\theta$, $f$-divergence clip $C$ or Topk threshold $K$, learning rate $\eta$, prompt dataset $\cD$
\State Select $\dagger \in \{\text{clip},\text{TopK}\}$
\For{each training iteration}
    \State Set $\pi_{\theta_{\text{old}}} \leftarrow \pi_\theta$
    \State Sample a prompt batch $\cB \subset \cD$
    \State Rollout buffer $\mathcal{R} \leftarrow \emptyset$
    \For{each prompt $\q \in \cB$}
        \State Sample $\x = (x^1, \cdots, x^{|\x|}) \sim \pi_{\theta_{\text{old}}}(\cdot \mid \q)$
        \For{$t = 1$ to $|\x|$}
        \If{$\dagger = \text{TopK}$}
            \State Query teacher: $Q_t=\left(\pi_{\mathtt{te}}(\cdot \mid \q,\x^{<t}), \text{TopK}(\pi_{\mathtt{te}}(\cdot \mid \q,\x^{<t}))\right),H_t^\mathtt{te}$
        \Else
         \State Query teacher: $Q_t=\pi_{\mathtt{te}}(\cdot \mid \q,\x^{<t}),H_t^\mathtt{te}$
        \EndIf

        \EndFor
        \State $\cH^\mathtt{te}\leftarrow \{t\in[|\x|]: H_t^\mathtt{te}\in \mathrm{Top}_{25\%}(\{H_s^\mathtt{te}\}_{s=1}^{|\x|})\}$
        \State Store trajectory-level data:
        $
        \left(\q, \x, \cH^\mathtt{te}, \{Q_t\}_{t\in \cH^\mathtt{te}}\right) \rightarrow \mathcal{R}
        $
    \EndFor
    \For{each mini-batch gradient step}
        \State Sample a mini-batch of prompts $\cB_{\text{mini}} \subset \mathcal{R}$
        \State Compute the loss using \eqref{topk_div}:
        \[
        \mathcal{L}(\theta) = \frac{1}{ |\cB_{\text{mini}}|} \sum_{(\q,\x) \in \cB_{\text{mini}}} \frac{1}{|\cH^\mathtt{te}|}  \sum_{t\in \cH^\mathtt{te}} D_f^{\dagger}\big(\pi_{\mathtt{te}}(\cdot\mid \q,\x^{<t})\,\Vert\, \pi_\theta(\cdot\mid \q,\x^{<t})\big)
        \]
        \State Update parameters: $\theta \leftarrow \theta - \eta \nabla_\theta \mathcal{L}(\theta)$
    \EndFor
\EndFor
\end{algorithmic}
\end{algorithm}

\begin{algorithm}[!ht]
\caption{JustOPD for DLMs: Black-Box Teacher Distillation}\label{alg:blackbox}
\begin{algorithmic}[1]
\Require ARM teacher policy $\pi_{\mathtt{te}}$, DLM student policy $\pi_\theta$, learning rate $\eta$, prompt dataset $\cD$
\For{each training iteration}
    \State Set $\pi_{\theta_{\text{old}}} \leftarrow \pi_\theta$
    \State Sample a prompt batch $\cB \subset \cD$
    \State Rollout buffer $\mathcal{R} \leftarrow \emptyset$
    \For{each prompt $\q \in \cB$}
        \State Sample $\x = (x^1, \cdots, x^{|\x|}) \sim \pi_{\theta_{\text{old}}}(\cdot \mid \q)$
        \For{$t = 1$ to $|\x|$}
         \State Sample tokens from the teacher: $Q_t=\{y^t\}\sim\pi_{\mathtt{te}}(\cdot \mid \q,\x^{<t})$
        \EndFor
        % \State $\cH^\mathtt{te}\leftarrow \{t\in[|\x|]: H_t^\mathtt{te}\in \mathrm{Top}_{25\%}(\{H_s^\mathtt{te}\}_{s=1}^{|\x|})\}$
        \State Store trajectory-level data:
        $
        \left(\q, \x, \{Q_t\}_{t\in \cH^\mathtt{te}}\right) \rightarrow \mathcal{R}
        $
    \EndFor
    \For{each mini-batch gradient step}
        \State Sample a mini-batch $\cB_{\text{mini}} \subset \mathcal{R}$ and a time set $\cT \subset [|\x|]$ for each $(\q,\x)\in \cB_\text{mini}$
        \State Compute the loss using \eqref{obj:blackbox_subsampling}:
        \[
        \mathcal{L}(\theta) = -\frac{1}{ |\cB_{\text{mini}}|} \sum_{(\q,\x) \in \cB_{\text{mini}}} \frac{1}{|\cT|}  \sum_{t\in \cT} 
    \frac{1}{|Q_t|}\sum_{y^t\in Q_t}
    \log \pi_\theta(y^t\mid \q,\x^{<t})
        \]
        \State Update parameters: $\theta \leftarrow \theta - \eta \nabla_\theta \mathcal{L}(\theta)$
    \EndFor
\EndFor
\end{algorithmic}
\end{algorithm}

\section{Proofs}
\subsection{Proof of Proposition~\ref{prop:signal_to_noise}}\label{signal_to_noise}
\textit{Proof of Proposition~\ref{prop:signal_to_noise}.}
We quantify the error incurred by the single-sample approximation in~\eqref{OPDobjective} and show that it is governed by the local behavior of $f$ at $u=1$.

Fix a prompt $\q$ and write the rollout as $\x=(x^1,\dots,x^T)\sim\pi_\theta(\cdot\mid \q)$, where $T=|\x|$ is treated as deterministic, as is the case for DLMs decoded with a fixed generation length $L$.\footnote{If $T$ is determined by an end-of-sequence stopping rule, the arguments below apply to the stopped martingale, with minor modifications to the $1/T$ normalization that we omit.}
Let $\mathcal{F}_t:=\sigma(x^1,\dots,x^t)$ denote the natural filtration, with $\mathcal{F}_0$ trivial. For notational simplicity, for each position~$t$, let the student and teacher next-token distributions be
\[
Q_t(\cdot):=\pi_\theta(\cdot\mid \q,\x^{<t})\quad \text{and}\quad P_t(\cdot):=\pi_{\mathrm{te}}(\cdot\mid \q,\x^{<t})
\]
are $\mathcal{F}_{t-1}$-measurable, and $x^t\mid\mathcal{F}_{t-1}\sim Q_t$. We write
\[
u_t(x):=\frac{P_t(x)}{Q_t(x)}=1+\Delta_t(x)
\quad \text{and}\quad
D_{f,t}:=D_f(P_t\,\|\,Q_t)=\mathbb{E}_{x'\sim Q_t}\big[f(u_t(x'))\big].
\]
Thus $D_{f,t}$ is the exact inner expectation in~\eqref{OPDobjective}, while $f(u_t(x^t))$ is its single-sample estimate. The token-level error is
$$
\xi_t:=f\big(u_t(x^t)\big)-D_{f,t}
$$
and trajectory-level approximation error is 
\begin{equation}\label{e_T}
e_T:=\frac{1}{T}\sum_{t=1}^{T}\xi_t=\frac{M_T}{T},
\end{equation}
where we write $ M_T:=\sum_{t=1}^{T}\xi_t$. Because $D_{f,t}$ is $\mathcal{F}_{t-1}$-measurable and $x^t\mid\mathcal{F}_{t-1}\sim Q_t$, we have $\mathbb{E}[\xi_t\mid\mathcal{F}_{t-1}]=0$. Hence, $(\xi_t)_{t\ge1}$ is a martingale difference sequence and $(M_t)_{t\ge 0}$ is a martingale with respect to $(\mathcal{F}_t)_{t\ge 0}$. This structure holds without any independence across positions: the tokens $x^t$ are strongly dependent through their prefixes, but each error $\xi_t$ is centered conditionally on the past. An immediate consequence is $\mathbb{E}[e_T]=0$, so the single-sample approximation introduces no bias in the objective value.

Let $\sigma_t^2:=\mathbb{E}[\xi_t^2\mid\mathcal{F}_{t-1}]=\operatorname{Var}_{x'\sim Q_t}\big[f(u_t(x'))\big]$ denote the conditional variance of the single-sample estimate. Since martingale increments are orthogonal, i.e., $\mathbb{E}[\xi_s\xi_t]=\mathbb{E}\big[\xi_s\,\mathbb{E}[\xi_t\mid\mathcal{F}_{t-1}]\big]=0$ for $s<t$, by the definition of $e_T$ in \eqref{e_T}, we have
\begin{equation}\label{eq:mse}
\sqrt{\mathbb{E}[e_T^2]}=\frac{1}{T}\sqrt{\mathbb{E}\Big[\sum_{t=1}^{T}\sigma_t^2\Big]}.
\end{equation}
In particular, if $\mathbb{E}\big[\sum_{t=1}^{T}\sigma_t^2\big]\sim\bar\sigma^2T$, then $\sqrt{\mathbb{E}[e_T^2]}\sim\bar\sigma/\sqrt{T}$.

Consider a family of rollouts indexed by $T\to\infty$ (a martingale difference sequence). Suppose that $\frac{1}{T}\sum_{t=1}^{T}\sigma_t^2\xrightarrow{\mathrm{p}}\bar\sigma^2$  and that the conditional Lindeberg condition holds, i.e., for every $\eta>0$,
\[
\frac{1}{T}\sum_{t=1}^{T}\mathbb{E}\big[\xi_t^2\,\mathbf{1}\{|\xi_t|>\eta\sqrt{T}\}\,\big|\,\mathcal{F}_{t-1}\big]\xrightarrow{\mathrm{p}} 0,
\]
where $\mathbf{1}\{\cdot\}$ is the indicator function. 
Then the martingale central limit theorem \citep{hall1980martingale} yields $\sqrt{T}\,e_T\xrightarrow{\mathrm{d}}\mathcal{N}(0,\bar\sigma^2)$, i.e., $e_T\approx\mathcal{N}(0,\bar\sigma^2/T)$ for long trajectories. In addition, if $|\xi_t|\le B$ for all $t$, the Azuma-Hoeffding inequality \citep{hoeffding1963probability,azuma1967weighted} yields
$$
\mathbb{P}\big(|e_T|\ge\eta\big)\le 2\exp\Big(-\frac{T\eta^2}{2B^2}\Big),\quad \forall \eta>0.
$$
A sharper variance-sensitive bound follows from Freedman's inequality \citep{freedman1975tail}: for all $\eta,v>0$, it holds that
$$
\mathbb{P}\Big(|e_T|\ge\eta,\ \frac{1}{T}\sum_{t=1}^{T}\sigma_t^2\le v\Big)\le 2\exp\Big(-\frac{T\eta^2}{2\,(v+B\eta/3)}\Big).
$$
Hence, a single trajectory of length $T$ approximates the average exact token-level $f$-divergence, with the probability of an error of size $\eta$ decaying exponentially in $T$. These statements, however, are only meaningful relative to the magnitude of the quantity being estimated: in the regime of interest, where the student is already close to the teacher, both $D_{f,t}$ and $\sigma_t$ are small, and the relevant question is how they scale relative to each other. This is determined by the behavior of $f$ near $u=1$, which we analyze next.

We analyze each position $t$ conditionally on its prefix and aggregate afterwards. First observe that
\[
\mathbb{E}_{Q_t}[\Delta_t]=\sum_{x}P_t(x)-\sum_{x}Q_t(x)=0.
\]
Assume that $|\Delta_t(x)|\le\gamma<1$ for all $x$ in the support of $Q_t$, and that $f$ is four times continuously differentiable on a neighborhood of $[1-\gamma,1+\gamma]$ with $f''(1)>0$. Since $|\Delta_t|\le\gamma$, we have $\mathbb{E}_{Q_t}|\Delta_t|^k\le\gamma^{k-2}\chi_t^2$ for all $k\ge2$, where $\chi_t^2:=\chi^2(P_t\,\|\,Q_t)=\mathbb{E}_{Q_t}[\Delta_t^2]$. Expanding $f(1+\Delta)=f'(1)\Delta+\tfrac12 f''(1)\Delta^2+\tfrac16 f'''(1)\Delta^3+O(\Delta^4)$ and taking expectations under $Q_t$, the linear term vanishes and
\begin{equation}\label{eq:local-signal}
D_{f,t}=\tfrac12 f''(1)\,\chi_t^2+\tfrac16 f'''(1)\,\mathbb{E}_{Q_t}[\Delta_t^3]+O(\gamma^2\chi_t^2)
=\tfrac12 f''(1)\,\chi_t^2\,\big(1+O(\gamma)\big).
\end{equation}
 Thus, locally, every $f$-divergence is a multiple of the $\chi^2$-divergence, and in particular, $D_{f,t}=O(\gamma^2)$.

The single-sample error is:
\[
\xi_t=f'(1)\,\Delta_t(x^t)+\tfrac12 f''(1)\big(\Delta_t(x^t)^2-\chi_t^2\big)+O(\gamma^3).
\]
Squaring, taking conditional expectations, and bounding every remaining term by a constant multiple of $\mathbb{E}_{Q_t}|\Delta_t|^k\le\gamma^{k-2}\chi_t^2$ with $k\ge4$, we obtain
\begin{equation}\label{eq:local-noise}
\sigma_t^2=f'(1)^2\,\chi_t^2+f'(1)f''(1)\,\mathbb{E}_{Q_t}[\Delta_t^3]+O(\gamma^2\chi_t^2).
\end{equation}
Consequently, whenever $f'(1)\neq0$, the noise is of first order, $\sigma_t=|f'(1)|\,\chi_t\,(1+O(\gamma))$, whereas the signal~\eqref{eq:local-signal} is of second order. Therefore, the local noise-to-signal ratio is
\begin{equation*}
\frac{\sigma_t}{D_{f,t}}=\frac{2\,|f'(1)|}{f''(1)\,\chi_t}\,\big(1+O(\gamma)\big),
\end{equation*}
which diverges as the student approaches the teacher. In other words, the better the student already matches the teacher, the less informative each single-sample estimate becomes relative to the quantity it estimates. The source of this deficiency is transparent: the leading term $f'(1)\Delta_t(x^t)$ has zero conditional mean, so it contributes nothing to $D_{f,t}$ while dominating the variance.

The centralized generator $\tilde f(u)=f(u)-f(1)-f'(1)(u-1)$ removes precisely the first-order noise. Because $\mathbb{E}_{Q_t}[u_t]=1$, it leaves the signal unchanged, $D_{\tilde f,t}=D_{f,t}$, while modifying the estimator to
\[
\tilde f\big(u_t(x^t)\big)=f\big(u_t(x^t)\big)-f'(1)\,\Delta_t(x^t).
\]
Since $\tilde f(1)=\tilde f'(1)=0$ and $\tilde f''=f''$, the first two terms of~\eqref{eq:local-noise} vanish for $\tilde f$, and carrying the expansion one order further yields
\begin{equation*}
\tilde\sigma_t^2=\tfrac14 f''(1)^2\operatorname{Var}_{Q_t}\!\big(\Delta_t^2\big)+O(\gamma^3\chi_t^2)
\le\tfrac14 f''(1)^2\,\gamma^2\chi_t^2\,\big(1+O(\gamma)\big),
\end{equation*}
so we have
\begin{equation*}
\frac{\tilde\sigma_t}{D_{\tilde f,t}}=\frac{\sqrt{\operatorname{Var}_{Q_t}(\Delta_t^2)}}{\mathbb{E}_{Q_t}[\Delta_t^2]}\,\big(1+O(\gamma)\big)\le\frac{\gamma}{\chi_t}\,\big(1+O(\gamma)\big).
\end{equation*}
The leading factor is the coefficient of variation of $\Delta_t^2$ under $Q_t$, a scale-free quantity that does not degenerate as the teacher and student approach each other. In the non-degenerate regime $\chi_t\sim\gamma$, in which the teacher and student are $\Theta(\gamma)$ apart, the local orders of magnitude are summarized below.
\begin{table}[!ht]
\centering
\renewcommand{\arraystretch}{1.0}
\setlength{\tabcolsep}{16pt}
\begin{tabular}{lcc}
\toprule
 & $f$ & $\tilde f$ \\
\midrule
Signal $D_{f,t}$
    & $\Theta(\gamma^2)$
    & $\Theta(\gamma^2)$ \\[2pt]
Single-sample variance $\sigma_t^2$
    & $\Theta(\gamma^2)$
    & $O(\gamma^4)$ \\[2pt]
Single-sample standard deviation $\sigma_t$
    & $\Theta(\gamma)$
    & $O(\gamma^2)$ \\[2pt]
Noise-to-signal ratio $\sigma_t/D_{f,t}$
    & $\Theta(1/\gamma)$
    & $O(1)$ \\
\bottomrule
\end{tabular}
\caption{Comparison of signal and noise scaling for $f$ and $\tilde f$.}
\label{tab:signal_noise_comparison}
\end{table}

The condition $|\Delta_t|\le\gamma$ may be too strong when the teacher assigns much larger probability than the student to a few tokens of small $Q_t$-mass. However, the analysis above uses $\gamma$ only through moment bounds of the form $\mathbb{E}_{Q_t}|\Delta_t|^k\lesssim\gamma^{k-2}\chi_t^2$, so it suffices to control $\chi_t$ together with the higher moments of $\Delta_t$. Specifically, suppose that for some $\delta\in(0,1]$,
\begin{enumerate}
\item[(i)] the second-order Taylor remainder of $f$ satisfies $\big|f(1+\Delta)-f'(1)\Delta-\tfrac12 f''(1)\Delta^2\big|\le K|\Delta|^{2+\delta}$ on the range of $\Delta_t$;
\item[(ii)] the moments of $\Delta_t$ are comparable to its second moment, i.e., $\mathbb{E}_{Q_t}|\Delta_t|^{k}\le C_k\,(\chi_t^2)^{k/2}$ for $k\in[2,4+2\delta]$.
\end{enumerate}
Then the same Taylor argument yields $D_{f,t}=\tfrac12 f''(1)\chi_t^2\,(1+O(\chi_t^{\delta}))$, $\sigma_t^2=f'(1)^2\chi_t^2\,(1+O(\chi_t))$ and $\tilde\sigma_t^2=O(\chi_t^4)$, so all of the conclusions above continue to hold with $\gamma$ replaced by $\chi_t$. Condition~(i) holds for the forward KL. For the reverse KL, it holds whenever $u_t$ is bounded away from zero, i.e., whenever the teacher does not assign vanishing probability to tokens that the student samples with non-negligible probability. $\hfill\square$

\subsection{Proof of Proposition~\ref{prop:corrected_adv}}\label{app:prop2}

\textit{Proof of Proposition~\ref{prop:corrected_adv}.}
(i) Since $P_t$ does not depend on $\theta$, we have $\nabla_\theta u_{\theta,t}=-u_{\theta,t}\nabla_\theta\log Q_{\theta,t}$. Differentiating both the sampling probability and the ratio, which is valid term by term because the support is finite, and using $\nabla_\theta Q_{\theta,t}=Q_{\theta,t}\nabla_\theta\log Q_{\theta,t}$, gives
\begin{align*}
 \nabla_\theta D_{f,t}(\theta)
 &=\sum_x\left[\nabla_\theta Q_{\theta,t}(x)\,f(u_{\theta,t}(x))
    +Q_{\theta,t}(x)f'(u_{\theta,t}(x))\nabla_\theta u_{\theta,t}(x)\right] \\
 &=\mathbb{E}_{Q_{\theta,t}}\!\left[
       \bigl(f(u_{\theta,t})-u_{\theta,t} f'(u_{\theta,t})\bigr)
       \nabla_\theta\log Q_{\theta,t}(x)\right]
  =-\mathbb{E}_{Q_{\theta,t}}\!\left[A_f(u_{\theta,t})\,\nabla_\theta\log Q_{\theta,t}(x)\right].
\end{align*}

(ii) Since $\mathbb{E}_{Q_{\theta,t}}[u_{\theta,t}]=\sum_x P_t(x)=1$ and $f(1)=0$,
\[
D_{\tilde f,t}(\theta)=\mathbb{E}_{Q_{\theta,t}}[\tilde f(u_{\theta,t})]=\mathbb{E}_{Q_{\theta,t}} [f(u_{\theta,t})]-f(1)-f'(1)\big(\mathbb{E}_{Q_{\theta,t}}[u_{\theta,t}]-1\big)=\mathbb{E}_{Q_{\theta,t}}[f(u_{\theta,t})]=D_{f,t}(\theta).
\]
Moreover, $\tilde f'(u)=f'(u)-f'(1)$, so
\[
A_{\tilde f}(u)=-f(u)+f(1)+f'(1)(u-1)+u\big(f'(u)-f'(1)\big)=A_f(u)+f(1)-f'(1)=A_f(u)-f'(1).
\]
$\hfill\square$

\section{Experiment Details}

\subsection{Details on Fine-Tuning Qwen3-14B}\label{FTQwen}

\begin{algorithm}[!ht]
\caption{Length-Aware Fine-Tuning of Qwen3-14B}
\label{alg:qwen_length}
\begin{algorithmic}[1]
\Require Policy $\pi_\phi$, dataset $\cD$, correctness evaluator $c$,
group size $K$, length cap $L$
\For{each training iteration}
    \State Sample a prompt batch $\cB\subset\cD$
    \State Rollout buffer $\mathcal{R}\leftarrow\emptyset$
    \For{each prompt $\q\in\cB$}
        \State Construct $\tilde{\q}$ with the concise-answer instruction
        \State Sample $K$ responses $\{\x_i\}_{i=1}^{K}$ from $\pi_\phi(\cdot\mid\tilde{\q})$, each capped at $L$ tokens
        \For{$i=1$ to $K$}
            \State Set $\ell_i=|\x_i|$ and $c_i=c(\q,\x_i)$
            \State Compute reward $r_i$ using \eqref{eq:correct_reward}.
        \EndFor
        \For{$i=1$ to $K$}
            \State Compute stop-gradient group-normalized advantage $A_i$
            \State Store $(\tilde{\q},\x_i,A_i)\rightarrow\mathcal{R}$
        \EndFor
    \EndFor
    \State Compute the loss:
    \[
    \mathcal{L}(\phi)=
    -\frac{1}{|\mathcal{R}|L}
    \sum_{(\tilde{\q},\x_i,A_i)\in\mathcal{R}}
    A_i\cdot\sum_{t=1}^{|\x_i|}
    \log\pi_\phi(x_i^t\mid\tilde{\q},\x_i^{<t}).
    \]
    \State Update parameters: $\phi \leftarrow \phi - \eta \nabla_\phi \mathcal{L}(\phi)$
\EndFor
\end{algorithmic}
\end{algorithm}

To obtain teachers that remain effective under a short generation budget (e.g. $L=256$), we fine-tune Qwen3-14B in advance for GSM8K and MATH using a length-aware, GRPO-style objective (see Algorithm~\ref{alg:qwen_length}). We use the model's chat template with thinking disabled and append an instruction asking for concise step-by-step reasoning, without restating the problem. During training, responses are sampled in the auto-regressive way with temperature $T = 1.0$, top-$p$ sampling with $p=0.95$.

We reward correct answers with a small brevity bonus that is not applied to incorrect ones. Specifically, let $c_i$ denote the correctness reward returned by the benchmark-specific evaluator for response $i$, with $c_i>0$ indicating a correct answer. Let $\ell_i$ denote its generated length, including the first end-of-sequence token when present. We assign
\begin{equation}\label{eq:correct_reward}
r_i=c_i+\lambda_{\mathrm{len}}\,
\mathbf{1}_{\{c_i>0\}}\left(1-\frac{\ell_i}{L_{\max}}\right).
\end{equation}
Shorter correct responses receive larger rewards, while incorrect responses receive no brevity bonus. In Algorithm~\ref{alg:qwen_length}, we set the brevity coefficient
$\lambda=0.01$ and the learning rate $\eta=5\times10^{-5}$.
Group-normalized advantages $A_i$ are divided by the within-group sample
standard deviation plus the numerical stabilizer $\epsilon=10^{-4}$.
The advantages are held fixed during gradient computation. Adapter checkpoints are saved every $100$ steps. The resulting task-specific teachers are denoted by Qwen3-14B-FT.

\subsection{Training Details}\label{app:training_details}

We use Dream-v0-Instruct-7B~\citep{ye2025dream} as the DLM student and Qwen3-14B~\citep{yang2025qwen3} as the ARM teacher. All experiments are conducted on on 7 NVIDIA L40S GPUs. During distillation the teacher stays frozen and is queried in non-thinking mode. Both models receive the same problem through their own chat templates. The student is trained with LoRA (rank $r=128$, $\alpha=64$, dropout $0.05$). At each step, every GPU draws one prompt and generates 8 on-policy rollouts from the student, giving 7 prompts and 56 rollouts per optimizer step. Rollouts have  sampling temperature $1.0$, and are generated in left-to-right order, unmasking one token per forward pass and stopping once every sequence has emitted an end-of-sequence token. The PPO clipping parameter is $\epsilon=0.2$. For the reverse KL training case, we add a forward-KL term with coefficient $0.1$, computed on the teacher's top-16 tokens and applied only at positions where the teacher entropy exceeds $\tau=0.8$, following the work of \citet{jin2026entropyaware}. We optimize with AdamW ($\beta_1=0.9$, $\beta_2=0.999$, $\epsilon=10^{-8}$, no weight decay) at a constant learning rate of $5\times10^{-5}$.

\subsection{Additional Experiment Results}
We present preliminary results of JustOPD on AIME 2024 \citep{aime24} in Table~\ref{tab:aime}, using reverse KL and training on all historical AIME problems before evaluating on AIME 2024. We observe that AIME 2024 is sufficiently challenging for Dream-7B that it fails to produce a single correct answer, even with a generation length of $1024$ tokens (see Table~\ref{tab:aime_eval}). Future work will focus on improving the teacher model's performance on AIME under shorter generation-length constraints, enabling more effective knowledge transfer to DLM students within their limited context budgets.

\begin{table}[!ht]
\centering
\setlength{\tabcolsep}{5pt}
\begin{tabular}{c cc cc cccc}
\toprule
& \multicolumn{2}{c}{\textbf{GSM8K}}
& \multicolumn{2}{c}{\textbf{MATH500}}
& \multicolumn{4}{c}{\textbf{AIME 24}} \\
\cmidrule(lr){2-3}
\cmidrule(lr){4-5}
\cmidrule(lr){6-9}
\textbf{Gen. Length}
& \multicolumn{2}{c}{\textbf{Acc.}}
& \multicolumn{2}{c}{\textbf{Acc.}} 
& \multicolumn{2}{c}{\textbf{Pass@16}}
& \multicolumn{2}{c}{\textbf{Avg@16}} \\
\cmidrule(lr){2-3}
\cmidrule(lr){4-5}
\cmidrule(lr){6-7}
\cmidrule(lr){8-9}
& \textbf{Stu.} & \textbf{Tr.}
& \textbf{Stu.} & \textbf{Tr.}
& \textbf{Stu.} & \textbf{Tr.}
& \textbf{Stu.} & \textbf{Tr.} \\
\midrule
$L=256$
& 77.03 & 51.71
& 31.80 & 14.40
& 0.00 & 0.00
& 0.00 & 0.00 \\

$L=512$
& \cellcolor{gray!20}78.77
& \cellcolor{gray!20}93.93
& 48.20 & 48.20
& 0.00 & 3.33
&0.00 & 0.83 \\

$L=1024$
& 56.25 & 94.84
& \cellcolor{gray!20}36.80 & \cellcolor{gray!20}70.80
& \cellcolor{gray!20}0.00 & \cellcolor{gray!20}13.33
& \cellcolor{gray!20}0.00 & \cellcolor{gray!20}4.58 \\
\bottomrule
\end{tabular}
\caption{Performance of Dream-7B and Qwen3-14B with different generation lengths $L$. Stu. and Tr. denote the base Dream-7B (student) and Qwen3-14B (teacher) models, respectively.}\label{tab:aime_eval}
\end{table}

% \begin{table}[!ht]
% \centering
% \setlength{\tabcolsep}{5pt}
% \begin{tabular}{c cc cc cccc}
% \toprule
% & \multicolumn{2}{c}{\textbf{GSM8K}}
% & \multicolumn{2}{c}{\textbf{MATH500}}
% & \multicolumn{4}{c}{\textbf{AIME 24}} \\
% \cmidrule(lr){2-3}
% \cmidrule(lr){4-5}
% \cmidrule(lr){6-9}
% \textbf{Gen. Length}
% & \multicolumn{2}{c}{\textbf{Acc.}}
% & \multicolumn{2}{c}{\textbf{Acc.}} 
% & \multicolumn{2}{c}{\textbf{Pass@16}}
% & \multicolumn{2}{c}{\textbf{Avg@16}} \\
% \cmidrule(lr){2-3}
% \cmidrule(lr){4-5}
% \cmidrule(lr){6-7}
% \cmidrule(lr){8-9}
% & \textbf{Stu.} & \textbf{Tr.}
% & \textbf{Stu.} & \textbf{Tr.}
% & \textbf{Stu.} & \textbf{Tr.}
% & \textbf{Stu.} & \textbf{Tr.} \\
% \midrule
% $L=256$
% & \cellcolor{gray!20}77.03 & \cellcolor{gray!20}93.48
% & \cellcolor{gray!20}31.80 & \cellcolor{gray!20}53.20
% & 0.00 & 
% & 0.00 &  \\

% % $L=512$
% % & \cellcolor{gray!20}84.99
% % & \cellcolor{gray!20}93.93
% % & 48.20 & 48.20
% % & 0.00 & 3.33
% % &0.00 & 0.83 \\

% % $L=1024$
% % & 56.25 & 94.84
% % & \cellcolor{gray!20}36.80 & \cellcolor{gray!20}70.80
% % & \cellcolor{gray!20}0.00 & \cellcolor{gray!20}13.33
% % & \cellcolor{gray!20}0.00 & \cellcolor{gray!20}4.58 \\
% \bottomrule
% \end{tabular}
% \caption{Performance of Dream-7B and fine-tuned Qwen3-14B with a short sequence length of $L=256$. Stu. and FT-Tr. denote the base Dream-7B (student) and Fine-Tuned Qwen3-14B (teacher) models, respectively.}
% \end{table}

\begin{table}[!ht]
\centering
\setlength{\tabcolsep}{8pt}
\begin{tabular}{l cc cc}
\toprule
& \textbf{GSM8K} 
& \textbf{MATH500} 
& \multicolumn{2}{c}{\textbf{AIME 24}} \\
\cmidrule(lr){2-2}
\cmidrule(lr){3-3}
\cmidrule(lr){4-5}
\textbf{Model} 
& \textbf{Acc.} ($L=512$)
& \textbf{Acc.} ($L=1024$)
& \textbf{Pass@16} ($L=1024$)
& \textbf{Avg@16} ($L=1024$) \\
\midrule
Dream-7B
& 78.77 
& 36.80
& 0.00
& 0.00 \\
+JustOPD
& \textbf{84.45}
& \textbf{43.80}
& \textbf{3.33}
& \textbf{0.21} \\
\bottomrule
\end{tabular}
\caption{Performance of Dream-7B before and after OPD on GSM8K, MATH500, and AIME 24. 
The generation length $L$ is specified for each benchmark.}\label{tab:aime}
\end{table}

\begin{comment}
\begin{table}[!ht]
\centering
\setlength{\tabcolsep}{8pt}
\begin{tabular}{l cc cc}
\toprule
& \textbf{GSM8K} 
& \textbf{MATH500} 
& \multicolumn{2}{c}{\textbf{AIME 24}} \\
\cmidrule(lr){2-2}
\cmidrule(lr){3-3}
\cmidrule(lr){4-5}
\textbf{Model} 
& \textbf{Acc.} ($L=256$)
& \textbf{Acc.} ($L=256$)
& \textbf{Pass@16} ($L=256$)
& \textbf{Avg@16} ($L=256$) \\
\midrule
Dream-7B
& 77.03
& 31.80
& 0.00
& 0.00 \\
+JustOPD (FT)
& \textbf{80.97}
& \textbf{38.80}
& \textbf{}
& \textbf{} \\
\bottomrule
\end{tabular}
\caption{Performance of Dream-7B before and after OPD with fine-tuned Qwen3-14B on GSM8K, MATH500, and AIME 24 on the short sequence length of $L=256$.}
\end{table}

 \end{comment}
\end{appendix}

\end{document}

%% file: mysty.tex
\usepackage[colorlinks=true, linkcolor=red, citecolor=blue, urlcolor=blue]{hyperref}
\usepackage[round]{natbib}

\usepackage{algorithm}
\usepackage{algorithmicx}
\usepackage{algpseudocode}
\usepackage[utf8]{inputenc} % allow utf-8 input
\usepackage[T1]{fontenc}    % use 8-bit T1 fonts
\usepackage[english]{babel}

\usepackage{amsfonts}
\usepackage{nicefrac}       % compact symbols for 1/2, etc.
\usepackage{microtype}      % microtypography
\usepackage{lmodern}
\usepackage{amssymb,amsmath,amsthm}
\usepackage{stmaryrd}
\usepackage{bbm,bm}
\usepackage{latexsym}
\usepackage{xcolor}
\usepackage[table]{xcolor}

\usepackage{enumerate}
\usepackage{enumitem}
\usepackage{verbatim}
\usepackage{booktabs}       % professional-quality tables
\usepackage{multirow}
\usepackage[most]{tcolorbox}
\usepackage{caption}
\usepackage{subcaption}

\usepackage{url}            % simple URL typesetting
\usepackage[colorlinks=true]{hyperref}
\usepackage[capitalise,noabbrev]{cleveref}
\usepackage{caption}

\usepackage{parskip}
\usepackage{geometry}
\usepackage[short]{optidef}
\usepackage{algorithm}

\usepackage{thmtools}

\declaretheorem{proposition}

\declaretheoremstyle[qed=$\square$]{definitionwithend}

\declaretheorem[style=definitionwithend]{remark}

\crefname{assumption}{Assumption}{Assumptions}
\crefname{fact}{Fact}{Facts}
\crefname{question}{Question}{Questions}
\crefname{observation}{Observation}{Observations} 

\definecolor{gold}{rgb}{0.85,0.65,0}

\let\emptyset\varnothing

\def\Z{{\mathbb{Z}}}

\def\cB{{\cal B}}

\def\cD{{\cal D}}

\def\cH{{\cal H}}

\def\cL{{\cal L}}

\def\cS{{\cal S}}
\def\cT{{\cal T}}

\def\cV{{\cal V}}

\def\e{{\mathbf e}}

\def\p{{\mathbf p}}
\def\q{{\mathbf q}}

\def\x{{\mathbf x}}
\def\y{{\mathbf y}}

\DeclareMathOperator*{\E}{\mathbb{E}}

\usepackage{soul}

\usepackage{amsmath,amssymb,amsthm}

\usepackage{amsthm}

%% file: ref.bib
@inproceedings{
ni2026the,
title={The Flexibility Trap: Rethinking the Value of Arbitrary Order in Diffusion Language Models},
author={Zanlin Ni and Shenzhi Wang and Yang Yue and Tianyu Yu and Weilin Zhao and Yeguo Hua and Tianyi Chen and Jun Song and Cheng Yu and Bo Zheng and Gao Huang},
booktitle={Forty-third International Conference on Machine Learning},
year={2026},
url={https://openreview.net/forum?id=kpgURPRMGf}
}

@inproceedings{
jin2026entropyaware,
title={Entropy-Aware On-Policy Distillation of Language Models},
author={Woogyeol Jin and Taywon Min and Yongjin Yang and Dennis Wei and Yi Zhou and Swanand Ravindra Kadhe and Nathalie Baracaldo and Kimin Lee},
booktitle={Forty-third International Conference on Machine Learning},
year={2026},
url={https://openreview.net/forum?id=J5i09faOOf}
}

@article{zhao2026opd+,
  title={Opd+: Rethinking the advantage design for on-policy distillation},
  author={Zhao, Hanyang and Chen, Haoxian and Lin, Han and Winata, Genta Indra and Yao, David and Tang, Wenpin},
  journal={arXiv preprint arXiv:2606.01039},
  year={2026}
}

@inproceedings{cheng2026sdar,
  title={Sdar: A synergistic diffusion-autoregression paradigm for scalable sequence generation},
  author={Cheng, Shuang and Bian, Yihan and Liu, Dawei and Jiang, Yuhua and Liu, Yihao and Zhang, Linfeng and Yao, Qian and Tian, Zhongbo and Wang, Wenhai and Guo, Qipeng and others},
  booktitle={Findings of the Association for Computational Linguistics: ACL 2026},
  pages={22058--22075},
  year={2026}
}

@article{yang2025qwen3,
  title={Qwen3 technical report},
  author={Yang, An and Li, Anfeng and Yang, Baosong and Zhang, Beichen and Hui, Binyuan and Zheng, Bo and Yu, Bowen and Gao, Chang and Huang, Chengen and Lv, Chenxu and others},
  journal={arXiv preprint arXiv:2505.09388},
  year={2025}
}

@article{zhao2025diffpo,
  title={Diffpo: Training diffusion llms to reason fast and furious via reinforcement learning},
  author={Zhao, Hanyang and Liang, Dawen and Tang, Wenpin and Yao, David and Kallus, Nathan},
  journal={arXiv preprint arXiv:2510.02212},
  year={2025}
}

@article{ye2025dream,
  title={Dream 7b: Diffusion large language models},
  author={Ye, Jiacheng and Xie, Zhihui and Zheng, Lin and Gao, Jiahui and Wu, Zirui and Jiang, Xin and Li, Zhenguo and Kong, Lingpeng},
  journal={arXiv preprint arXiv:2508.15487},
  year={2025}
}

@book{hall1980martingale,
  title={Martingale limit theory and its application},
  author={Hall, Peter and Heyde, Christopher C and Birnbaum, ZW and Lukacs, E},
  volume={12},
  year={1980},
  publisher={Elsevier}
}

@article{hoeffding1963probability,
  title={Probability inequalities for sums of bounded random variables},
  author={Hoeffding, Wassily},
  journal={Journal of the American statistical association},
  volume={58},
  number={301},
  pages={13--30},
  year={1963},
  publisher={Taylor \& Francis}
}

@article{azuma1967weighted,
  title={Weighted sums of certain dependent random variables},
  author={Azuma, Kazuoki},
  journal={Tohoku Mathematical Journal, Second Series},
  volume={19},
  number={3},
  pages={357--367},
  year={1967},
  publisher={Mathematical Institute, Tohoku University}
}

@article{freedman1975tail,
  title={On tail probabilities for martingales},
  author={Freedman, David A},
  journal={the Annals of Probability},
  pages={100--118},
  year={1975},
  publisher={JSTOR}
}

@article{ren2026trace,
  title={Trace-Based On-Policy Distillation for Masked Diffusion Language Models},
  author={Ren, Haolin and Huang, Ziyang and Yuan, Chenhao and Zhao, Jun and Liu, Kang},
  journal={arXiv preprint arXiv:2607.16872},
  year={2026}
}

@article{nie2026large,
  title={Large language diffusion models},
  author={Nie, Shen and Zhu, Fengqi and You, Zebin and Zhang, Xiaolu and Ou, Jingyang and Hu, Jun and Zhou, Jun and Lin, Yankai and Wen, Ji-Rong and Li, Chongxuan},
  journal={Advances in Neural Information Processing Systems},
  volume={38},
  pages={50608--50646},
  year={2026}
}

@article{shi2024simplified,
  title={Simplified and generalized masked diffusion for discrete data},
  author={Shi, Jiaxin and Han, Kehang and Wang, Zhe and Doucet, Arnaud and Titsias, Michalis},
  journal={Advances in neural information processing systems},
  volume={37},
  pages={103131--103167},
  year={2024}
}

@article{sahoo2024simple,
  title={Simple and effective masked diffusion language models},
  author={Sahoo, Subham S and Arriola, Marianne and Schiff, Yair and Gokaslan, Aaron and Marroquin, Edgar and Chiu, Justin T and Rush, Alexander and Kuleshov, Volodymyr},
  journal={Advances in Neural Information Processing Systems},
  volume={37},
  pages={130136--130184},
  year={2024}
}

@inproceedings{wang2026revolutionizing,
  title={Revolutionizing reinforcement learning framework for diffusion large language models},
  author={Wang, Yinjie and Yang, Ling and Li, Bowen and Tian, Ye and Shen, Ke and Wang, Mengdi},
  booktitle={International Conference on Learning Representations},
  volume={2026},
  pages={101476--101501},
  year={2026}
}

@article{dat2026dopsd,
  title={dOPSD: On-Policy Self-Distillation for Diffusion Language Models},
  author={Dat, Phuong Tuan and Li, Qi and Wang, Xinchao},
  journal={arXiv preprint arXiv:2607.04428},
  year={2026}
}

@article{luo2026learning,
  title={Learning from the Self-future: On-policy Self-distillation for dLLMs},
  author={Luo, Yifu and Chen, Zeyu and Wang, Haoyu and Hu, Xinhao and Zhang, Yuxuan and Sha, Zhizhou and Liu, Shiwei},
  journal={arXiv preprint arXiv:2606.18195},
  year={2026}
}

@article{su2026data,
  title={Data-Efficient Autoregressive-to-Diffusion Language Models via On-Policy Distillation},
  author={Su, Xingyu and Helwig, Jacob and Parashar, Shubham and Chagi, Atharv and Jotsna, Lakshmi and Zhi, Degui and Caverlee, James and Kalathil, Dileep and Ji, Shuiwang},
  journal={arXiv preprint arXiv:2606.06712},
  year={2026}
}

@article{zhao2026self,
  title={Self-distilled reasoner: On-policy self-distillation for large language models},
  author={Zhao, Siyan and Xie, Zhihui and Liu, Mengchen and Huang, Jing and Pang, Guan and Chen, Feiyu and Grover, Aditya},
  journal={arXiv preprint arXiv:2601.18734},
  year={2026}
}

@inproceedings{agarwal2024policy,
  title={On-policy distillation of language models: Learning from self-generated mistakes},
  author={Agarwal, Rishabh and Vieillard, Nino and Zhou, Yongchao and Stanczyk, Piotr and Ramos Garea, Sabela and Geist, Matthieu and Bachem, Olivier},
  booktitle={International Conference on Learning Representations},
  volume={2024},
  pages={21246--21263},
  year={2024}
}

@article{lu2025onpolicydistillation,
  author = {Kevin Lu and Thinking Machines Lab},
  title = {On-Policy Distillation},
  journal = {Thinking Machines Lab: Connectionism},
  year = {2025},
  note = {https://thinkingmachines.ai/blog/on-policy-distillation},
  doi = {10.64434/tml.20251026},
}

@article{zhang2026continuous,
  title={A Continuous-Time Reinforcement Learning Framework for Fine-Tuning Discrete Diffusion Models},
  author={Zhang, Zikun and Sheng, Jiayuan and Yao, David D and Tang, Wenpin},
  journal={arXiv preprint arXiv:2607.14522},
  year={2026}
}

@inproceedings{
zhao2025d,
title={d1: Scaling Reasoning in Diffusion Large Language Models via Reinforcement Learning},
author={Siyan Zhao and Devaansh Gupta and Qinqing Zheng and Aditya Grover},
booktitle={The Thirty-ninth Annual Conference on Neural Information Processing Systems},
year={2025},
url={https://openreview.net/forum?id=7ZVRlBFuEv}
}

@inproceedings{
yang2026mmada,
title={{MM}a{DA}: Multimodal Large Diffusion Language Models},
author={Ling Yang and Ye Tian and Bowen Li and Xinchen Zhang and Ke Shen and Yunhai Tong and Mengdi Wang},
booktitle={The Thirty-ninth Annual Conference on Neural Information Processing Systems},
year={2026},
url={https://openreview.net/forum?id=wczmXLuLGd}
}

@inproceedings{wang2026d2,
  title={d2: Improved techniques for training reasoning diffusion language models},
  author={Wang, Guanghan and Turok, Gilad and Schiff, Yair and Arriola, Marianne and Kuleshov, Volodymyr},
  booktitle={The Forty-third International Conference on Machine Learning},
  year={2026}
}

@inproceedings{
ou2026principled,
title={Principled {RL} for Diffusion {LLM}s Emerges from a Sequence-Level Perspective},
author={Jingyang Ou and Jiaqi Han and Minkai Xu and Shaoxuan Xu and Jianwen Xie and Stefano Ermon and Yi Wu and Chongxuan Li},
booktitle={The Fourteenth International Conference on Learning Representations},
year={2026},
url={https://openreview.net/forum?id=S5YeC9llIL}
}

@inproceedings{gu2024minillm,
  title={Minillm: Knowledge distillation of large language models},
  author={Gu, Yuxian and Dong, Li and Wei, Furu and Huang, Minlie},
  booktitle={International Conference on Learning Representations},
  volume={2024},
  pages={32694--32717},
  year={2024}
}

@article{yang2026learning,
  title={Learning beyond teacher: Generalized on-policy distillation with reward extrapolation},
  author={Yang, Wenkai and Liu, Weijie and Xie, Ruobing and Yang, Kai and Yang, Saiyong and Lin, Yankai},
  journal={arXiv preprint arXiv:2602.12125},
  year={2026}
}

@inproceedings{
kim2025train,
title={Train for the Worst, Plan for the Best: Understanding Token Ordering in Masked Diffusions},
author={Jaeyeon Kim and Kulin Shah and Vasilis Kontonis and Sham M. Kakade and Sitan Chen},
booktitle={Forty-second International Conference on Machine Learning},
year={2025},
url={https://openreview.net/forum?id=DjJmre5IkP}
}

@article{ben2026accelerated,
  title={Accelerated sampling from masked diffusion models via entropy bounded unmasking},
  author={Ben-Hamu, Heli and Gat, Itai and Severo, Daniel and Nolte, Niklas S and Karrer, Brian},
  journal={Advances in Neural Information Processing Systems},
  volume={38},
  pages={55981--56007},
  year={2026}
}

@inproceedings{chang2022maskgit,
  title={Maskgit: Masked generative image transformer},
  author={Chang, Huiwen and Zhang, Han and Jiang, Lu and Liu, Ce and Freeman, William T},
  booktitle={2022 IEEE/CVF Conference on Computer Vision and Pattern Recognition (CVPR)},
  pages={11305--11315},
  year={2022},
  organization={IEEE}
}

@inproceedings{lou2023discrete,
  title={Discrete diffusion language modeling by estimating the ratios of the data distribution},
  author={Lou, Aaron and Meng, Chenlin and Ermon, Stefano},
  booktitle={International Conference on Machine Learning},
  year={2024}
}

@article{campbell2022continuous,
  title={A continuous time framework for discrete denoising models},
  author={Campbell, Andrew and Benton, Joe and De Bortoli, Valentin and Rainforth, Thomas and Deligiannidis, George and Doucet, Arnaud},
  journal={Advances in Neural Information Processing Systems},
  volume={35},
  pages={28266--28279},
  year={2022}
}

@article{cobbe2021training,
  title={Training verifiers to solve math word problems},
  author={Cobbe, Karl and Kosaraju, Vineet and Bavarian, Mohammad and Chen, Mark and Jun, Heewoo and Kaiser, Lukasz and Plappert, Matthias and Tworek, Jerry and Hilton, Jacob and Nakano, Reiichiro},
  journal={arXiv preprint arXiv:2110.14168},
  year={2021}
}

@inproceedings{lightman2024let,
  title={Let's verify step by step},
  author={Lightman, Hunter and Kosaraju, Vineet and Burda, Yuri and Edwards, Harrison and Baker, Bowen and Lee, Teddy and Leike, Jan and Schulman, John and Sutskever, Ilya and Cobbe, Karl},
  booktitle={International Conference on Learning Representations},
  volume={2024},
  pages={39578--39601},
  year={2024}
}

@misc{aime24,
      title={American Invitational Mathematics Examination (AIME) 2024}, 
      author={Zhang, Yifan and Math-AI, Team},
      year={2024},
}

@article{song2026survey,
  title={A survey of on-policy distillation for large language models},
  author={Song, Mingyang and Zheng, Mao},
  journal={arXiv preprint arXiv:2604.00626},
  year={2026}
}

@article{huang2026hesitation,
  title={Hesitation-aware on-policy distillation for diffusion language models},
  author={Huang, Jianguo and Wan, Lipeng and Deng, Yanchen and An, Bo},
  journal={arXiv preprint arXiv:2609.33301},
  year={2026}
}

@article{wang2026remasking,
  title={Remasking discrete diffusion models with inference-time scaling},
  author={Wang, Guanghan and Schiff, Yair and Sahoo, Subham and Kuleshov, Volodymyr},
  journal={Advances in Neural Information Processing Systems},
  volume={38},
  pages={147282--147339},
  year={2026}
}

@article{sun2026step,
  title={Step-Level On-Policy Distillation: Interpolating Between On-Policy Distillation and Supervised Fine-Tuning},
  author={Sun, Changhui and Liu, Lanbo and Lei, Hang and Ling, Tong and Xie, Jiahang and Zheng, Zhiyong and Wang, Yujia and Liu, Hao and Xiao, Feng and Liu, Lu et al},
  journal={arXiv preprint arXiv:2608.16333},
  year={2026}
}

@article{schulman2017proximal,
  title={Proximal policy optimization algorithms},
  author={Schulman, John and Wolski, Filip and Dhariwal, Prafulla and Radford, Alec and Klimov, Oleg},
  journal={arXiv preprint arXiv:1707.06347},
  year={2017}
}

@misc{patiño2025_unlocking_on_policy_distillation_for_any_model_family,
  title={Unlocking On-Policy Distillation for Any Model Family},
  author={Carlos Miguel Patiño and Kashif Rasul and Quentin Gallouédec and Ben Burtenshaw and Sergio Paniego and Vaibhav Srivastav and Thibaud Frere and Ed Beeching and Lewis Tunstall and Leandro von Werra and Thomas Wolf},
  year={2025},
  howpublished={\url{https://huggingface.co/spaces/HuggingFaceH4/on-policy-distillation}},
}

@article{li2026rethinking,
  title={Rethinking On-Policy Distillation of Large Language Models: Phenomenology, Mechanism, and Recipe},
  author={Li, Yaxuan and Zuo, Yuxin and He, Bingxiang and Zhang, Jinqian and Xiao, Chaojun and Qian, Cheng and Yu, Tianyu and Gao, Huan-ang and Yang, Wenkai and Liu, Zhiyuan and Ding, Ning},
  journal={arXiv preprint arXiv:2604.13016},
  year={2026}
}

@article{ding2026does,
  title={Does On-Policy Distillation Really Distill? From Noisy Teacher to Self-Improvement},
  author={Ding, Yi and Zhang, Ruqi},
  journal={arXiv preprint arXiv:2608.31046},
  year={2026}
}

@article{yu2026understanding,
  title={Understanding On-Policy Distillation: A Mechanistic Interpretability Perspective via Sparse Crosscoders},
  author={Yu, Zichao and Ye, Qianshuo and Wang, Xu and Zou, Difan},
  journal={arXiv preprint arXiv:2609.35210},
  year={2026}
}

@article{fu2026rethinking,
  title={Rethinking On-Policy Distillation of Large Language Models {II}: One Training Example},
  author={Fu, Zixuan and He, Bingxiang and Zuo, Yuxin and Huang, Haohuan and Zhang, Jinqian and Xiao, Ruhang and Qian, Cheng and Luo, Qinyu and Gao, Huan-ang and Wang, Yudong and Liu, Zhiyuan and Ding, Ning and Xiao, Chaojun},
  journal={arXiv preprint arXiv:2609.04172},
  year={2026}
}

@article{liu2026forkleft,
  title={ForkLeft: Entropy-First Rollouts for Prefix-Aligned Autoregressive-to-Diffusion Distillation},
  author={Liu, Junming and Wang, Jicheng and He, Yifeng and Chen, Hao and Qi, Jianzhong},
  journal={arXiv preprint arXiv:2609.32448},
  year={2026}
}

@inproceedings{
fu2026efficientdlm,
title={Efficient-{DLM}: From Autoregressive to Diffusion Language Models, and Beyond in Speed},
author={Yonggan Fu and Lexington Whalen and Zhifan Ye and Xin Dong and Shizhe Diao and Jingyu Liu and Chengyue Wu and Hao Zhang and Enze Xie and Song Han and Maksim Khadkevich and Jan Kautz and Yingyan Celine Lin and Pavlo Molchanov},
booktitle={Forty-third International Conference on Machine Learning},
year={2026},
url={https://openreview.net/forum?id=ynRnOqI2W8}
}
